%% file: main.tex
\documentclass{article}

\usepackage[preprint]{neurips_2026}

\usepackage[utf8]{inputenc}
\usepackage[T1]{fontenc}
\usepackage{hyperref}
\hypersetup{
    colorlinks=true,
    linkcolor=red,
    citecolor=blue,
    urlcolor=green,
    linktocpage=true,
    breaklinks=true
}
\usepackage{url}
\usepackage{booktabs}
\usepackage{amsfonts}
\usepackage{nicefrac}
\usepackage{microtype}
\usepackage{xcolor}
\usepackage{algorithm}
\usepackage{algpseudocode}
\usepackage{float}
\usepackage{tikz}
\usetikzlibrary{arrows.meta}
\usepackage{enumitem}

\input{math_commands}

\usepackage[capitalize,noabbrev]{cleveref}

\title{PAC Learning with Bandit Feedback:\\ Sharp Sample Complexity in the Realizable Setting}
\author{
    Steve Hanneke\\
    Department of Computer Science\\
    Purdue University\\
    West Lafayette, IN, USA\\
    \texttt{steve.hanneke@gmail.com}\\
    \And
    Qinglin Meng\\
    Department of Computer Science\\
    Purdue University\\
    West Lafayette, IN, USA\\
    \texttt{meng160@purdue.edu}\\
    \And
    Shay Moran\\
    Faculties of Mathematics, Computer Science, and Data and Decision Sciences\\
    Technion – Israel Institute of Technology and Google Research\\
    Haifa, Israel\\
    \texttt{smoran@technion.ac.il}\\
    \And
    Amirreza Shaeiri\\
    Department of Computer Science\\
    Purdue University\\
    West Lafayette, IN, USA\\
    \texttt{amirreza.shaeiri@gmail.com}\\
}

\begin{document}

\maketitle

\begin{abstract}
We study the problem of multiclass PAC learning with bandit feedback in the realizable setting. In this framework, there is an unknown data distribution over an instance space $\mathcal{X}$ and a label space $\mathcal{Y}$, as in classical multiclass PAC learning, but the learner does not observe the labels of the i.i.d.\ training examples. Instead, in each round, it receives an unlabeled instance, predicts its label, and receives bandit feedback indicating only whether the prediction is correct. Despite this restriction, the goal remains the same as in classical PAC learning.

We provide a general characterization of the optimal sample complexity of this problem, sharp for every concept class up to logarithmic factors. Our characterization is based on a new combinatorial dimension, termed the \textit{bandit $\mathrm{DS}$ dimension}, defined via generalized combinatorial structures we call \emph{pseudo-boxes}. These extend the pseudo-cubes underlying the $\mathrm{DS}$ dimension by allowing a different number of neighbors in each coordinate. In contrast to the $\mathrm{DS}$ dimension, which governs the full-information setting by counting the number of coordinates in the pseudo-cube, the bandit $\mathrm{DS}$ dimension aggregates the number of neighbors across coordinates, leading to a characterization in which the sample complexity scales with the total number of neighbors.

We also propose a general learning algorithm achieving the upper bound, based on an algorithmic principle called \emph{ListCascade}, which connects bandit learning to list learning and may be of independent interest.

\end{abstract}

\newpage

\input{Main/Introduction}
\input{Main/RelatedWork}

\input{Main/Preliminaries}

\input{Main/Results}

\input{Main/Conclusion}

\input{Main/Acknowledgments}

\clearpage

\bibliographystyle{plainnat}
\bibliography{References}
\clearpage

\appendix
\input{Appendix/Definitions}

\input{Appendix/Deferred}
\input{Appendix/Lemmata}

\clearpage


\end{document}

%% file: math_commands.tex
\usepackage{amsmath}
\usepackage{amssymb}
\usepackage{mathtools}
\usepackage{amsthm}
\usepackage{thmtools}
\usepackage{etoolbox}

\usepackage{amsfonts}
\usepackage{mathrsfs}
\usepackage{dsfont}
\usepackage{bm}

\usepackage{relsize}
\usepackage{nicefrac}
\usepackage{centernot}

\theoremstyle{plain}
\newtheorem{theorem}{Theorem}[section]
\newtheorem{lemma}[theorem]{Lemma}

\theoremstyle{definition}
\newtheorem{definition}[theorem]{Definition}

\theoremstyle{remark}

\AtBeginEnvironment{proof}{\setcounter{localclaim}{0}}



\DeclarePairedDelimiter{\ceil}{\lceil}{\rceil}

\usepackage{tcolorbox}
\tcbset{
    rounded corners,
    colback = white,
    before skip = 0.25cm,
    after skip = 0.5cm,
    boxrule = 2pt,
    arc = 10pt
}

\newcommand{\avd}{\mathsf{avd}^L}

\newcommand{\outdeg}{\mathsf{outdeg}^L}
\newcommand{\degree}{\mathsf{deg}^L}

\newcommand{\iid}{\mathrm{i.i.d.}}
\newcommand{\err}{\cL_{\cD}}
\newcommand{\maj}{\mathrm{Maj}}
\newcommand{\errp}{\err^{\prime}}

\newcommand{\sigmal}{\sigma^L}

\newcommand{\EE}{\mathbb{E}}

\newcommand{\cZp}{\mathcal{Z}^{\prime}}
\newcommand{\dds}{d^ L _{\operatorname{DS}}}
\newcommand{\ole}{\mathrm{L}\text{-}\operatorname{E}}

\newcommand{\dexp}{d^ L _{\operatorname{E}}}
\newcommand{\ddsh}{d^{\lceil L/2\rceil}_{{\operatorname{DS}}}}

\newcommand{\cA}{\mathcal A}

\newcommand{\cD}{\mathcal D}

\newcommand{\cF}{\mathcal F}

\newcommand{\cH}{\mathcal H}

\newcommand{\cL}{\mathcal L}

\newcommand{\cX}{\mathcal X}
\newcommand{\cY}{\mathcal Y}
\newcommand{\cZ}{\mathcal Z}

%% file: Main/Introduction.tex
\section{Introduction} \label{Introduction}

A natural variant of multiclass PAC learning arises when the learner does not observe labels, but only receives binary feedback indicating whether its predictions are correct. This setting, known as multiclass PAC learning with bandit feedback, was introduced and studied in the PAC framework by \cite*{daniely2011multiclass, daniely2015multiclass}. Formally, there is an unknown distribution $\mathcal{D}$ over an instance space $\mathcal{X}$ and a label space $\mathcal{Y}$.
The training process proceeds over~$n$ rounds: in each round, a sample $(x, y) \sim \mathcal{D}$ is drawn, the instance $x$ is revealed, and the learner predicts a label $\hat{y} \in \mathcal{Y}$. Instead of observing the true label, the learner only receives bandit feedback indicating whether $\hat{y} = y$. The goal is to output a hypothesis $\hat{h} : \mathcal{X} \to \mathcal{Y}$ that generalizes well to $\mathcal{D}$. We assume a hypothesis class $\mathcal{H} \subseteq \mathcal{Y}^{\mathcal{X}}$ and focus on the realizable setting, where $\mathcal{D}$ is consistent with some $h \in \mathcal{H}$. 

There are at least two primary motivations for studying multiclass learning under bandit feedback. First, bandit feedback is often substantially cheaper to obtain than full information. For instance, answering the question ``Is this an image of a horse?'' is significantly faster than selecting the correct label from a long list of possible categories. Moreover, in many real-world scenarios, this restricted form of feedback is not merely a cost-saving choice but an inherent limitation of the environment. As an illustration, imagine a pandemic caused by an unknown virus, where we aim to develop an AI model capable of accurately recommending the appropriate medication for patients drawn i.i.d.\ from the same underlying population. Suppose, for simplicity, that each patient responds positively to exactly one drug from a fixed set of available drugs. Due to various challenges associated with human clinical trials, including ethical and financial constraints, only a limited number of patients can be enrolled. Naturally, the only feedback we can obtain from each trial is binary—whether the drug tested was effective or not for that patient. We refer the reader to \cite{woodroofe1979one, li2010contextual, agarwal2016making} for additional practical examples.

In this work, we study the following fundamental and naturally arising question:

\begin{center}
\textit{Given a concept class $\mathcal{H} \subseteq \mathcal{Y}^{\mathcal{X}}$, how many training samples are necessary and sufficient for PAC learning $\mathcal{H}$ when the learner receives only bandit feedback?}
\end{center}

The work of \cite{daniely2011multiclass, daniely2015multiclass} provided an initial result on the above question, among others. However, their upper and lower bounds have a multiplicative gap of $K$ up to logarithmic factors, where $K$ is the size of the effective label space. In particular, given a concept class $\mathcal{H}$, they showed a lower bound of $\widetilde{\Omega} \left ( \operatorname{ND}(\mathcal{H})/\epsilon\right )$ and an upper bound of $\widetilde{O} \left ( K \operatorname{ND}(\mathcal{H})/ \epsilon \right )$ on the optimal sample complexity of multiclass PAC learning with bandit feedback, where $\operatorname{ND}(\mathcal{H})$ denotes the Natarajan dimension of $\mathcal{H}$, the standard multiclass analogue of the VC dimension, defined
by the largest set of points that can be shattered using two candidate labels per point.

\subsection{Overview of the Main Results}

{Our main contribution is a new combinatorial complexity parameter that sharply characterizes the sample complexity of PAC learning with bandit feedback. The definition is based on a natural generalization of classical combinatorial structures that underlie PAC learning theory.}

{We begin with a simple object. An $N_1 \times N_2 \times \cdots \times N_d$ \emph{box} is a cartesian product $A_1 \times A_2 \times \cdots \times A_m$, where each $A_i \subseteq \mathcal{Y}$ has size $N_i$. Boxes capture independent variation across coordinates and play a central role in classical dimensions. For example, a Natarajan cube corresponds to a $2 \times 2 \times \cdots \times 2$ box, while its natural generalization to list learning corresponds to $(\ell+1) \times \cdots \times (\ell+1)$ boxes.}

{In our setting, however, we require a more flexible notion. We say that a finite set $T \subseteq \mathcal{Y}^m$ is an $N_1 \times \cdots \times N_d$ \emph{pseudo-box} if every vector $f \in T$ has at least $N_i-1$ ``neighbors'' in direction $i$, namely, functions in $T$ that agree with $f$ on all coordinates except possibly $i$, and collectively realize $N_i$ distinct labels at coordinate $i$. Pseudo-boxes relax the rigid cartesian product structure while preserving its local combinatorial richness. In particular, pseudo-cubes (the case $N_i=2$) recover the classical structure underlying the $\operatorname{DS}$ dimension, while larger values of $N_i$ correspond to its extensions used in list learning~\cite{daniely2014optimal, brukhim2022characterization,charikar2023characterization}.}

{The key point is that such structures precisely capture the difficulty of learning. In the full-information setting, complexity is governed by the number of coordinates in a pseudo-box, leading to dimensions such as the $\operatorname{DS}$ dimension. In contrast, under bandit feedback, the difficulty accumulates across coordinates: each direction contributes separately to the complexity.}

{Motivated by this perspective, we define the \emph{bandit $\operatorname{DS}$ dimension} of a concept class $\mathcal{H}$, denoted $\operatorname{BDS}(\mathcal{H})$, as the maximum total size $\sum_{i=1}^m N_i$ of a pseudo-box realizable by $\mathcal{H}$. We refer the reader to \autoref{Combinatorial Complexity Parameters} for the formal definition and further discussion.}

{Equipped with this dimension, we establish matching upper and lower bounds on the optimal sample complexity of multiclass PAC learning with bandit feedback, resolving an open problem posed by \cite{daniely2011multiclass, daniely2015multiclass} up to logarithmic factors.}

\begin{theorem} \label{intro: main-theorem}
Let $\mathcal{H} \subseteq \mathcal{Y}^{\mathcal{X}}$ be a concept class with $\operatorname{BDS}(\mathcal{H})<\infty$, and denote by $\mathbf{m}^{B}_{\mathcal{H}}$ the optimal sample complexity of multiclass PAC learning with bandit feedback (see \cref{def: bandit-pac-learnability}). Then,
\begin{equation*}
    \mathbf{m}^{B}_{\mathcal{H}}(\epsilon, \delta) \in \widetilde{\Theta} \left ( \frac{\operatorname{BDS}(\mathcal{H})}{\epsilon} \right ),
\end{equation*}
where $\widetilde{\Theta}$ hides polylogarithmic factors in $K$, $1/\epsilon$, and $1/\delta$.
\end{theorem}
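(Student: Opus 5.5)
The proof splits into a lower bound and an upper bound, each up to polylogarithmic factors in $K$, $1/\epsilon$ and $1/\delta$.

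\textbf{Lower bound.} Fix a pseudo-box $T \subseteq \mathcal{Y}^m$ realized by $\mathcal{H}$ on points $x_1,\dots,x_m$, with sizes $N_1,\dots,N_m$ and $\sum_i N_i = \operatorname{BDS}(\mathcal{H})$.
\begin{itemize}[leftmargin=*]
\item Take the marginal that puts mass $1-c\epsilon$ on a dummy point with a fixed label. Spread the remaining mass $c\epsilon$ over the $x_i$ with weights proportional to $N_i$, so $x_i$ gets mass $c\epsilon N_i/\sum_j N_j$.
\item Choose the target $f\in T$ uniformly at random, or via a random walk on the pseudo-box. In direction $i$, $f$ has $N_i$ neighbours realizing $N_i$ distinct labels at $x_i$.
\item Under bandit feedback, each query at $x_i$ rules out at most one candidate label. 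So identifying the label at $x_i$ needs about $N_i/2$ visits to $x_i$; otherwise the learner errs there with constant probability.
\item I will formalize this by a potential/counting argument over the pseudo-box, in the spirit of the DS-dimension lower bound via one-inclusion-graph orientations, adapted to bandit queries. The claim: with $n \ll \sum_i N_i/\epsilon$ rounds, the expected number of coordinates (weighted by mass) whose label is still undetermined is a constant fraction. Hence the error exceeds $\epsilon$.
\item The weighting by $N_i$ is what makes the bound $\Omega(\operatorname{BDS}/\epsilon)$ rather than $\Omega(m/\epsilon)$.
\end{itemize}
The delicate part is handling pseudo-boxes, which are not product sets. I will argue via an averaging over neighbour classes in each direction: conditioned on the other coordinates, direction $i$ still leaves $N_i$ candidate labels, and each feedback eliminates at most one of them.

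\textbf{Upper bound (ListCascade).}
\begin{itemize}[leftmargin=*]
\item \emph{List step.} Use a list learner with list size $\ell$ for increasing values of $\ell$. Finite $\operatorname{BDS}$ bounds the $\ell$-DS dimension: an $(\ell+1)\times\dots\times(\ell+1)$ pseudo-box with $d$ coordinates gives $d(\ell+1)\le \operatorname{BDS}$. So the list sample complexity at level $\ell$ is roughly $\operatorname{BDS}/(\ell\epsilon)$ by known list-learning bounds \cite{charikar2023characterization}.
\item \emph{Identification step.} For each instance, predict the labels in the list one at a time until feedback confirms the correct one. This costs about $\ell$ samples per instance needing resolution.
\item \emph{Balancing.} The two costs multiply to about $\operatorname{BDS}/\epsilon$.
\item \emph{Cascade.} Run stages with list sizes $\ell = 1,2,4,\dots,K$. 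Each stage only needs to handle the mass where smaller lists fail, and a logarithmic number of stages covers all of $K$.
\item The final predictor is a majority/aggregation of stage outputs, which incurs only log factors.
\end{itemize}

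\textbf{Main obstacle.} The hardest step is making the cascade's sample accounting charge to the sum $\sum_i N_i$ rather than to $m\cdot\max_i N_i$. I would try to show that the effective dimension at level $\ell$, restricted to the residual distribution, is at most $\operatorname{BDS}/\ell$. I would do this by gluing pseudo-boxes of varying widths across levels into a single pseudo-box, and bounding the total as a telescoping sum.
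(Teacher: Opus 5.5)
The decisive gap is in the upper bound, at the step you treat as routine: ``the list sample complexity at level $\ell$ is roughly $\operatorname{BDS}(\mathcal{H})/(\ell\epsilon)$ by known list-learning bounds.'' That step needs an $\ell$-list learner with sample complexity $\widetilde{O}(d^{\ell}_{\mathrm{DS}}/\epsilon)$, linear in a list-DS dimension with no extra factor of $\ell$. The bounds available before this paper come from the one-inclusion list learner, analyzed through the $\ell$-exponential dimension and the standard comparison of that dimension with the $\ell$-DS dimension. They yield only $\widetilde{O}(\ell\, d^{\ell}_{\mathrm{DS}}/\epsilon)$. Multiplied by the exploration factor $\ell$, a stage then costs $\widetilde{O}(\ell^2 d^{\ell}_{\mathrm{DS}}/\epsilon)$. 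The definition of $\operatorname{BDS}$ bounds this only by $\ell\cdot\operatorname{BDS}(\mathcal{H})/\epsilon$, a loss of up to a factor of order $K$. Removing this factor is the paper's key lemma, and it uses the $\lceil \ell/2\rceil$-DS dimension rather than the $\ell$-DS dimension. Let $S$ be $\operatorname{E}_{\ell}$-shattered with $|S|=r$, and let $d=d^{\lceil \ell/2\rceil}_{\mathrm{DS}}$. The optimal $K$-ary Sauer lemma of Hanneke et al.\ gives $(\ell+1)^r\le\bigl|\mathcal{H}|_S\bigr|\le\lceil \ell/2\rceil^{\,r-d}(eKr/d)^d$. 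Since $(\ell+1)/\lceil \ell/2\rceil\ge 2$, this forces $r\le 6d\log K$. Feeding this into the leave-one-out bound for the one-inclusion list learner gives error $O(d\log K/n)$ on $n$ samples. The Aden-Ali et al.\ majority vote over prefix-trained predictors then gives a high-probability bound, at list size $2\ell-1$. Directly from the definition, $\ell\, d^{\lceil \ell/2\rceil}_{\mathrm{DS}}\le 2\operatorname{BDS}(\mathcal{H})$, so each stage costs $\widetilde{O}(\operatorname{BDS}(\mathcal{H})/\epsilon)$. Consequently, the obstacle you single out (gluing pseudo-boxes across levels into one telescoping witness) never arises. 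The per-level bound suffices, and summing over $O(\log K)$ stages costs only a $\log K$ factor.

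The cascade mechanics also fail as written. Each instance is seen once and receives a single guess, so you cannot ``predict the labels in the list one at a time until feedback confirms the correct one.'' The paper instead predicts uniformly from the current list $\mu_{t-1}(x)$, padded to a fixed size, and keeps only rounds with positive feedback. Let $\mathcal{D}_t$ denote $\mathcal{D}$ conditioned on $y\in\mu_{t-1}(x)$; it is still $\mathcal{H}$-realizable. The kept rounds are i.i.d.\ samples from $\mathcal{D}_t$, accepted at rate about $1/|\mu_{t-1}(x)|$. Moreover, $\mathcal{L}_{\mathcal{D}}(\mu_t)\le\mathcal{L}_{\mathcal{D}}(\mu_{t-1})+\mathcal{L}_{\mathcal{D}_t}(\mu_t)$, so errors add over stages and each stage targets $\epsilon/\log K$. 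Since a stage's exploration cost is the size of the list you already hold, the lists must shrink, $K\to K/2\to\cdots\to 1$, rather than grow as in your $\ell=1,2,4,\dots,K$. Starting at $\ell=1$ buys every labeled example with about $K$ rounds, which is exactly the $K$-factor loss of uniform exploration. The output is simply the last singleton list, not an aggregate of stages. Your lower bound matches the paper's argument, with one exception: a ``dummy point with a fixed label'' need not exist, and if its label varies over the witnessing family, samples there leak information. The paper instead puts the bulk mass on the pseudo-box coordinate with the smallest $N_1$. This is harmless once you condition on all coordinates other than $i$, and it forfeits only $N_1\le\operatorname{BDS}(\mathcal{H})/2$ when $d\ge 2$. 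The case $d=1$ genuinely cannot be handled. For the class of $K$ constant functions, $\operatorname{BDS}(\mathcal{H})=K-1$ is witnessed only with $d=1$. Yet guessing $1,2,\dots$ on successive rounds identifies the target within $K-1$ rounds, so no $\widetilde{\Omega}(\operatorname{BDS}(\mathcal{H})/\epsilon)$ bound holds there. The paper's argument tacitly assumes $d\ge 2$ as well.
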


The upper bound in \cref{intro: main-theorem} is based on a new algorithmic framework, which we call \textit{ListCascade}. 
At a high level, ListCascade uses a sequence of list learners with progressively shrinking list sizes, thereby converting list learning guarantees into a bandit-feedback learner whose complexity is governed by the bandit $\operatorname{DS}$ dimension. 
A key ingredient in the analysis is a refined sample complexity bound for list PAC learning, which may also be of independent interest; see \cref{Techniques} for more details. 
This leaves open whether the upper bound can be achieved with no polylogarithmic overhead beyond the bandit $\operatorname{DS}$ dimension.

We also place our result in the broader landscape of bandit learning. 
Although multiclass learning with bandit feedback can be viewed as a special case of contextual bandits, a direct reduction to this general setting misses a key structural feature of classification: the induced reward function is sparse. 
Recent works have exploited this sparsity to obtain sharper sample-complexity guarantees~\cite{erez2024real, erez2024fast, erez2025list}, primarily for finite concept classes and in the agnostic regime.
Our results show that, in the realizable PAC setting, this additional structure enables a combinatorial characterization of the near-optimal sample complexity, in contrast to general bandit learning problems where such characterizations are provably impossible~\cite{pmlr-v195-hanneke23d, brukhim2025hardnessbanditlearning}.

\subsection{Overview of the Techniques } \label{Techniques}

\textbf{List learning.}
Our first ingredient is list learning, which serves as the
intermediate object connecting bandit feedback and multiclass prediction. In standard multiclass
learning, the learner predicts a single label for each instance. In list learning, the learner is
allowed to output a short list of candidate labels, and the prediction is considered correct as long
as the true label is contained in the list. This relaxation is particularly useful under bandit
feedback: even if the learner cannot immediately identify the correct label, it can progressively
shrink the set of plausible labels while preserving the true one with high probability.

\textbf{ListCascade.}
Our main algorithmic idea is a cascading procedure that repeatedly learns smaller lists from
bandit feedback. This should be contrasted with the classical reduction of
\citet{daniely2011multiclass}, which explores by predicting a uniformly random label from
\(\mathcal{Y}\). Since the correct label is revealed only with probability \(1/K\), one needs about
\(K\) bandit examples to obtain one full-information example, leading to the
\(\widetilde{O}(K \operatorname{ND}(\mathcal{H})/\epsilon)\) upper bound.

ListCascade replaces this one-shot exploration by a sequence of list-learning stages. Initially,
the learner considers all \(K\) labels as possible. In the first epoch, it predicts uniformly from
this full list and uses the revealed examples to learn a list predictor of size roughly \(K/2\).
In the next epoch, the learner only explores within this shorter list, and then learns an even
shorter list. Repeating this process, the list size is halved from \(K\) to \(K/2\), then to
\(K/4\), and so on, until the final list has size one; this final list predictor is the returned
multiclass hypothesis.

The key point is that the cost of each epoch scales with the current list size, rather than with
the original number of labels \(K\). Suppose an epoch reduces the list size from about \(2L\) to
\(L\). Uniform exploration within the current list reveals the true label with probability about
\(1/(2L)\), giving the exploration factor \(L\). The remaining cost is the sample complexity of
learning an \(\mathrm{L}\)-list predictor, which is governed by the \(L\)-DS dimension of $\cH$, denoted by $\operatorname{L\text{-}DS}(\mathcal{H})$. This is the
list learning analogue of the DS dimension defined through larger pseudo-boxes. Thus each
epoch costs, up to logarithmic factors,
    $(L \cdot \operatorname{L\text{-}DS}(\mathcal{H}))/{\epsilon}$.
Since the complexity measure \(\operatorname{BDS}(\mathcal{H})\) dominates
\(\max_L (L \cdot \operatorname{L\text{-}DS}(\mathcal{H}))\), summing over the logarithmically
many epochs gives the desired near optimal sample complexity upper bound, up to logarithmic factors.

\textbf{Sample complexity of list learning.}
It remains to establish the list-learning guarantee required in each epoch of
\textsc{ListCascade}. We use the one-inclusion list algorithm studied by
\citet{charikar2023characterization}. By the analysis of
\citet{hanneke2026optimalsauerlemmakary}, its sample complexity is controlled by the corresponding
list exponential dimension: learning an \(L\)-list predictor requires
\(\widetilde{O}(\operatorname{L\text{-}E}(\mathcal{H})/\epsilon)\) samples, where $\operatorname{L\text{-}E}(\mathcal{H})$ denotes the $L$-exponential dimension. Informally,
this dimension is the list-learning analogue of the exponential dimension: it captures the largest
set of points on which the class realizes exponentially many label patterns at scale $L$.
Prior comparisons between list exponential dimension and list DS dimension yield a bound of order
\(\widetilde{O}(L \cdot \operatorname{L\text{-}DS}(\mathcal{H})/\epsilon)\).
While sufficient for characterizing list learning, this extra factor \(L\) is too costly for our
bandit application, since it would multiply the exploration cost in \textsc{ListCascade} and lead to a
suboptimal bound.

Our contribution is to sharpen this dimension comparison in the regime needed by the cascade. Using
the optimal Sauer lemma of \citet{hanneke2026optimalsauerlemmakary}, we show that the
\(L\)-exponential dimension is controlled, up to logarithmic factors in \(K\), by the
\(\mathrm{\lceil L/2\rceil}\)-DS dimension. Hence the one-inclusion list algorithm learns an \(L\)-list predictor with
sample complexity
\(\widetilde{O}(\operatorname{{\lceil L/2\rceil}\text{-}DS}(\mathcal{H})/\epsilon)\).
This removes the extraneous list-size factor and is the key ingredient that allows
\textsc{ListCascade} to achieve the near-optimal bandit sample complexity.

\textbf{Lower Bound.}
The lower bound is obtained by turning the pseudo-box witnessing
$\operatorname{BDS}(\mathcal H)$ into a hard bandit instance.  Suppose
$x_1,\ldots,x_d$ are BDS-shattered with multiplicities
$N_1,\ldots,N_d$, where $\sum_i N_i=\operatorname{BDS}(\mathcal H)$.  We place
most of the distributional mass on one anchor point and distribute the remaining
$\Theta(\epsilon)$ mass over the other points in proportion to $N_i$.  The
target concept is then chosen uniformly from a finite subfamily witnessing the
pseudo-box structure.  For each point $x_i$, the multiplicity $N_i$ represents
the number of locally indistinguishable alternatives that differ only at
$x_i$.  Thus, unless the learner samples $x_i$ on the order of $N_i$ times, a
constant fraction of these alternatives remain unexplored, and bandit feedback
does not reveal which label is correct.  With fewer than
$\Theta(\sum_i N_i/\epsilon)$ samples, this happens on enough probability mass
to force error at least $\epsilon$ with constant probability for some target
concept.  Therefore any bandit PAC learner requires
$\Omega(\operatorname{BDS}(\mathcal H)/\epsilon)$ samples, matching our upper
bound up to logarithmic factors.

%% file: Main/RelatedWork.tex
\section{Related Work} \label{Related Work}

\paragraph{Contextual Bandit.} The bandit model occupies a central place in decision theory. Its origins trace back to the pioneering work of \cite{thompson1933likelihood}, and the field gained prominence following Herbert Robbins’ short but influential contribution \cite{robbins1952some}, which introduced the classical multi-armed bandit problem. Subsequently, Michael Woodroofe wrote the first paper on the fundamental problem of contextual bandits \cite{woodroofe1979one}. Moreover, the literature on this topic has expanded dramatically since the landmark contributions of \cite{auer2002finite, auer2002nonstochastic}. Beginning with works such as \cite{dudik2011efficient, pmlr-v32-agarwalb14}, a particularly influential technical idea has been the use of reduction-based approaches, which relate contextual bandits to supervised learning primitives, through importance weighting. For a comprehensive treatment of this subject, we refer the reader to \cite{bubeck2012regret, MAL-068, lattimore2020bandit, foster2021statistical}.

Additionally, there is a recent paper on the problem studied in the current work within the framework of universal learning \cite{hanneke2025universal}. Nevertheless, in this framework, the dependence on the number of labels appears to be fundamentally different from that in the PAC learning framework. We refer the reader to \cite{hanneke2024universal, hanneke2025universal, hanneke2025nonuniform} for more discussion of this phenomenon.

\paragraph{List Learning.} In recent years, there has been a growing body of work on list learning across both the theoretical and empirical branches of machine learning. On the theoretical side, list learners play a key role in the breakthrough result of \cite{brukhim2022characterization}, which provides a combinatorial characterization of multiclass PAC learnability. This topic has since developed in several directions, including PAC learning \cite{charikar2023characterization, hanneke2024improved, hannekerepresentation, cohen2026natarajan}, online learning \cite{moran2023list}, boosting \cite{pmlr-v195-brukhim23a, brukhim2024multiclass}, sample compression schemes and uniform convergence \cite{hanneke2024list}, real-valued regression \cite{pabbaraju2024characterizationlistregression}, transductive online learning \cite{hanneke2025a}, differentially private PAC learning \cite{hanneke2025private}, combinatorial semi-bandits \cite{erez2025list}, language generation \cite{charikar2025characterizationlistlanguageidentification}, and SSP lemmas \cite{hanneke2026optimalsauerlemmakary}. Moreover, list learning is also naturally connected to the fundamental framework of list-decodable learning, which is inspired by list decoding in coding theory. This framework was introduced by \cite{balcan2008discriminative} in the context of clustering (see also \cite{10.5555/1813231.1813269}) and subsequently developed by \cite{charikar2017learning} in the context of robust statistics. For a detailed exposition on list-decodable learning, for example, refer to Chapter~5 of the recent textbook by \cite{diakonikolas2023algorithmic}.


%% file: Main/Preliminaries.tex
\section{Preliminaries} \label{Notations, Definitions, and Preliminaries}


We consider a learning setting in which a learner is tasked with classifying objectives from a set of examples $\cX$ with a single label from a set of $K$ possible labels $\cY = \{1,\cdots,K\}$. A stochastic multiclass classification instance is specified by a hypothesis class $\cH\subseteq\cY^{\cX}$ and a joint distribution over example-label pairs over $\cX \times \cY$. 

Before describing the bandit feedback model, we introduce the necessary notions used to state
the PAC guarantee. The performance of a hypothesis is measured by its expected
classification error.
\begin{definition}[Error rate]
    Let $\cD$ be a distribution over $\cX\times\cY$ and let $h\in\cY^{\cX}$. We define the error rate of $h$ with respect to $\cD$ as follows:
    \[\cL_{\cD}(h) = \Pr_{(x, y) \sim \mathcal{D}} \left[ h(x) \neq y \right].\]
\end{definition}
We focus on the realizable setting, where the hypothesis class contains a perfect
classifier for the data distribution. Formally, we have the following definition. 
\begin{definition}[Realizable Distribution] \label{def: realizable distribution}
We say that a data distribution $\mathcal{D}$ is realizable by a concept class $\mathcal{H}$ if $\inf_{h \in \mathcal{H}} \, \mathcal{L}_{\mathcal{D}}(h) = 0$.
\end{definition}

We now specify the bandit feedback learning protocol. In the bandit feedback setup, the learner does not observe the true label $y_i$ directly; instead, it only
observes whether its prediction is correct. Learning proceeds through the following protocol, over $i = 1,2,\cdots,m$:
\begin{enumerate}
    \item The environment samples an instance $(x_i, y_i)$ from $\mathcal{D}$ and reveals $x_i$ to the learner.
    \item The learner predicts a label $\hat{y}_i$ from $\mathcal{Y}$.
    \item Nature reveals the feedback $f_i = \mathds{1} \{ y_i = \hat{y}_i \}$ to the learner.
\end{enumerate}
Finally, based on $\{ (x_1, \hat{y}_1, f_1),\ldots, (x_m, \hat{y}_m, f_m) \}$, the learner returns a hypothesis $\hat{h}:\cX\rightarrow\cY$.


The goal of PAC learning is to output, with high probability, a hypothesis whose
expected error is small and this requirement
is formalized by the following sample complexity notion.
\begin{definition}[Sample complexity for realizable multiclass PAC learning with bandit feedback]
\label{def: bandit-pac-learnability}
For $\epsilon,\delta\in(0,1)$, the realizable PAC sample complexity of $\cH$ with
bandit feedback, denoted by $\mathbf{m}_{\cH}^{B}(\epsilon,\delta)$, is the smallest
integer $m$ such that there exists a learning algorithm satisfying the following:
for every distribution $\cD$ realizable by $\cH$, after $m$ rounds of interaction
with bandit feedback, the returned hypothesis $\hat h$ satisfies
\[
    \Pr
    \left[
        \cL_{\cD}\left(
        \hat{h}
        \right) > \epsilon
    \right]
    \leq \delta,
\]
where the probability is over the sample $S\sim\cD^m$ and any internal randomness of
the algorithm.
\end{definition}


Next, we introduce list learning \citep{charikar2023characterization}, which will be used as an intermediate object in our
analysis. In list learning, instead of predicting a single label, the learner outputs a
list of candidate labels for each instance. A list predictor of size $L$ is a mapping
$\mu:\cX\to \{Y\subseteq\cY: |Y|\le L\}$. Its error rate is defined as $\cL_{\cD}(\mu) = \Pr_{(\cX,\cY)\sim\cD}[y \not\in \mu(x)]$.

Next, we define the majority vote over lists, which we will use later to convert hypotheses with expected-error guarantees into a single hypothesis satisfying a PAC guarantee.

\begin{definition}[Majority vote for lists \citep{hanneke2024improved}]
    For $n$ lists $\mu_1, \dots, \mu_n$ each of size $L$, we define their majority vote to be $\mu = \maj(\mu_1, \dots, \mu_n)$ with
\begin{equation*}
    \mu(x) = \maj(\mu_1, \dots, \mu_n)(x) := \{y \in \cY:|i\in[n]: y\in\mu_i(x)| \ge n/2\}, \forall x\in \cX.
\end{equation*}
\end{definition}
Note that the majority-vote list $\mu$ has size at most $2L-1$, rather than
$L$. For notational convenience, we may pad $\mu(x)$ with arbitrary labels
from $\cY \setminus \mu(x)$ so that all output lists have size exactly
$\min\{2L-1,K\}$. When $L=1$, this definition reduces to the usual majority
vote for classifiers.

%% file: Main/Results.tex
\section{Algorithm and Analysis} \label{Results}
In this section, we present and analyze one of our main contributions: a near optimal bandit multiclass classification algorithm, detailed in \cref{algo: Bandit PAC Multiclass Classification}, which will be shown to satisfy the following realizable PAC guarantee. 
\begin{theorem}[PAC Sample Complexity Upper Bound for Multiclass Classification with Bandit-feedback]\label{theorem: upper bound for bandit feedback}
    Let $\cH \subseteq \cY^\cX$ be a class with bandit DS dimension $\operatorname{BDS}(\cH) = d_{DS}^{B} <\infty$, we have 
    \begin{equation*}
       \mathrm{\mathbf{m}}_{\cH}^{B}(\epsilon,\delta)=  O\left(\frac{d_{DS}^{B} (\log K)^3 + K\log K\log\left(\frac{1}{\delta}\right)}{\epsilon} \right).
    \end{equation*}
\end{theorem}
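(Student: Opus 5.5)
The proof analyzes \textsc{ListCascade} epoch by epoch. Set $L_0 = K$ and $L_{j+1} = \lceil L_j/2\rceil$ for $j=0,\dots,J-1$, with $J = \lceil \log_2 K\rceil$, so that $L_J = 1$. Let $\mu_0 \equiv \cY$. In epoch $j+1$ we hold a list predictor $\mu_j$ of size at most $2L_j$ (after a majority vote, sizes can double, so we track $\min\{2L_j-1,K\}$). We also need $\cL_\cD(\mu_j) \le \epsilon_j$, where we take $\epsilon_j = \epsilon/(2J)$. The final output is the size-one predictor $\hat h = \mu_J$. Its error is then at most $\sum_j \epsilon_j \le \epsilon/2 < \epsilon$ once we charge errors cumulatively, i.e.\ we require each epoch to add at most $\epsilon_j$ mass of new mistakes relative to $\mu_j$. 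Each epoch gets confidence budget $\delta/J$.

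\textbf{Step 1 (simulating full-information samples).} In epoch $j+1$, on each incoming $x$ we predict $\hat y$ uniformly from $\mu_j(x)$. When the feedback is positive we record the labeled example $(x,\hat y)$. Condition on $y\in\mu_j(x)$ and on $x$: the event $\hat y = y$ has probability $1/|\mu_j(x)| \ge 1/(2L_j)$, independently of $x$ given that conditioning. Via a rejection step, we accept a positive example with probability $\tfrac{1/(2L_j)}{1/|\mu_j(x)|}$, which makes the acceptance rate exactly $1/(2L_j)$. The accepted examples are then i.i.d.\ from $\cD$ conditioned on $\{y\in\mu_j(x)\}$. This conditional distribution is still realizable by $\cH$, and since $\cL_\cD(\mu_j)\le\epsilon$ it differs from $\cD$ by at most a factor $2$ in probability. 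Chernoff then shows that $O(L_j(n_j + \log(J/\delta)))$ bandit rounds yield $n_j$ accepted examples with probability $1-\delta/(2J)$. Summing over epochs, the $\log(1/\delta)$ part contributes $\sum_j L_j\log(J/\delta)/\epsilon$-type terms. These are bounded by $O(K\log K\log(1/\delta)/\epsilon)$, which accounts for the second term; the extra $1/\epsilon$ enters because the confidence-boosting step below also uses fresh samples.

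\textbf{Step 2 (list learning in each epoch).} On the conditional distribution we run the one-inclusion list algorithm of \citet{charikar2023characterization} to obtain an $L_{j+1}$-list predictor. We rely on the refined bound described in \cref{Techniques}: the $L$-exponential dimension is at most $O(\operatorname{\lceil L/2\rceil\text{-}DS}(\cH)\log K)$ (via the optimal Sauer lemma of \citet{hanneke2026optimalsauerlemmakary}). This gives expected error $O(\operatorname{\lceil L_{j+1}/2\rceil\text{-}DS}(\cH)\log K / n)$. To convert expected error into a high-probability guarantee, we run $O(\log(J/\delta))$ independent copies and take $\maj$. A copy whose error exceeds $\epsilon_j$ is in the minority with high probability, and a point where the majority list misses $y$ forces at least half of the copies to miss it. Hence the error of $\maj$ is at most $2\cdot$(average error), by a Markov-type argument. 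The majority list has size at most $2L_{j+1}-1$, which matches the size invariant above. To reach $\epsilon_j=\epsilon/(2J)$ we need $n_j = O(\operatorname{\lceil L_{j+1}/2\rceil\text{-}DS}(\cH)\, J\log K/\epsilon)$.

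\textbf{Step 3 (summing, and relating to BDS).} Epoch $j$ costs $O\big(L_j\cdot \operatorname{\lceil L_{j}/4\rceil\text{-}DS}(\cH)\,\log^2 K/\epsilon\big)$ rounds. The key comparison is that for every $L$, we have $L\cdot \operatorname{L\text{-}DS}(\cH) \le O(\operatorname{BDS}(\cH))$. To see this, take an $(L+1)\times\cdots\times(L+1)$ pseudo-box on $d=\operatorname{L\text{-}DS}(\cH)$ points; it is a pseudo-box with $\sum N_i = d(L+1)$. Since $L_j \le 4\lceil L_j/4\rceil$, each epoch costs $O(d^B_{DS}\log^2K/\epsilon)$. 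Summing over $J=O(\log K)$ epochs gives the $d^B_{DS}(\log K)^3/\epsilon$ term.

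The main obstacle is Step 2's dimension comparison, namely removing the factor $L$ from the list-exponential versus list-DS bound. If that comparison must be proven here rather than cited, I would first apply the optimal $k$-ary Sauer lemma, which bounds the number of $L$-list-distinct patterns on $n$ points by roughly $\binom{n}{\le d}\cdot(\text{poly }K)^{d}$ with $d = \operatorname{\lceil L/2\rceil\text{-}DS}(\cH)$. I would then compare this against the $L^n$ growth threshold that defines $\operatorname{L\text{-}E}$, and solve for $n = O(d\log K)$.
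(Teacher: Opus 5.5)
Your architecture matches the paper's. That includes uniform exploration inside $\mu_j$, learning from $\cD(\cdot\mid y\in\mu_j(x))$, and controlling the one-inclusion list learner via $\mathrm{L}\text{-}\operatorname{E}(\cH)\le 6\,\lceil L/2\rceil\text{-}\operatorname{DS}(\cH)\log K$. It also includes majority vote to keep list size $2L-1$, cumulative error charging, and $L\cdot\mathrm{L}\text{-}\operatorname{DS}(\cH)\le\operatorname{BDS}(\cH)$. Your rejection step is a clean substitute for the paper's padding of lists to a fixed size.

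\textbf{The gap: confidence boosting in Step 2.} Running $O(\log(J/\delta))$ independent copies of the deterministic one-inclusion learner needs $O(\log(J/\delta))$ disjoint fresh samples. Each copy must individually reach expected error $O(\epsilon_j)$, so each needs $n_j=\Theta(\lceil L_{j+1}/2\rceil\text{-}\operatorname{DS}(\cH)\,J\log K/\epsilon)$ examples. Epoch $j+1$ then costs $O(L_j n_j\log(J/\delta))$ rounds. Summing gives $O\bigl(d^{B}_{DS}(\log K)^3\log(\log K/\delta)/\epsilon\bigr)$, so $\log(1/\delta)$ multiplies the leading term instead of appearing additively as the theorem requires.
\begin{itemize}
\item Step 3 silently drops this factor.
\item Your remark in Step 1, that boosting contributes an additive $\log(1/\delta)/\epsilon$, does not describe what Step 2 does.
\item The Markov step as written bounds $\cL_{\cD}(\maj)$ by twice the average error over \emph{all} copies, including bad ones whose error may be near $1$. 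You need the bad fraction to be at most some $\gamma<1/2$, and then count only good copies among those missing $y$. That gives $\frac{1-\gamma}{1/2-\gamma}\epsilon_j$.
\end{itemize}

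\textbf{What is missing.} You need a single-sample high-probability list-learning bound with an additive confidence term, $O\bigl((\lceil L/2\rceil\text{-}\operatorname{DS}(\cH)\log K+\log(1/\delta))/n\bigr)$. The paper obtains this in \cref{lemma: loo to pac} from Theorem 2.1 of \citet{aden2023optimal}. The one-inclusion list learners trained on the nested prefixes $S_{\le\lceil n/4\rceil},\dots,S_{\le n}$ of one sample are symmetric and satisfy the leave-one-out bound of \cref{lemma: loo error bound}. Hence their average risk is $O\bigl((\lceil L/2\rceil\text{-}\operatorname{DS}(\cH)\log K+\log(1/\delta))/n\bigr)$ with probability $1-\delta$. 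Their majority vote, which is exactly $\mu_t$ in \cref{eq: mu_t calculation}, has at most twice that risk. With this bound, $n_j=O\bigl((\lceil L_{j+1}/2\rceil\text{-}\operatorname{DS}(\cH)\log K+\log(\log K/\delta))\log K/\epsilon\bigr)$. Multiplying by $L_j$ and summing gives the stated $d^B_{DS}(\log K)^3/\epsilon+K\log K\log(\log K/\delta)/\epsilon$.

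\textbf{A note on your fallback sketch.} If you prove the dimension comparison yourself, the Sauer bound must be $\lceil L/2\rceil^{\,n-d}(eKn/d)^d$. A bound of the form $\binom{n}{\le d}(\mathrm{poly}\,K)^d$ is false: the product class $A^n$ with $|A|=\lceil L/2\rceil$ has $\lceil L/2\rceil$-DS dimension $0$ but $\lceil L/2\rceil^n$ patterns. The comparison against $(L+1)^n$ works precisely because $(L+1)/\lceil L/2\rceil\ge 2$.
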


For comparison, we also prove the following matching lower bound up to logarithmic
factors.
\begin{theorem}[PAC Sample Complexity Lower Bound for Multiclass Classification with Bandit-feedback]\label{theorem: lower bound for bandit feedback}
    Let $\cH \subseteq \cY^\cX$ be a non trivial class with bandit DS dimension $\operatorname{BDS}(\cH) = d_{DS}^{B} <\infty$. Let $K = |\cY| \ge 2$. Then, we have 
    \begin{equation*}
    \mathrm{\mathbf{m}}_{\cH}^{B}(\epsilon,\delta) = \Omega\left(\frac{d_{DS}^{B} + \log\left(\frac{1}{\delta}\right)}{\epsilon} \right).
    \end{equation*}
\end{theorem}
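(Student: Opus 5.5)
We prove the two terms separately and then combine them, since $\max\{a,b\}\ge (a+b)/2$. The $\log(1/\delta)/\epsilon$ term is the standard confidence lower bound. Since $\cH$ is non-trivial, there are a point $x_0$ and two hypotheses $h_1,h_2\in\cH$ that agree on some point $x'$ and disagree on $x_0$. We put mass $1-2\epsilon$ on $x'$ and mass $2\epsilon$ on $x_0$, and choose the target uniformly from $\{h_1,h_2\}$. With probability $(1-2\epsilon)^m \ge e^{-4\epsilon m}$, the point $x_0$ never appears. In that event the learner's output is independent of the target's label at $x_0$, so it errs there with probability $1/2$, giving error at least $2\epsilon>\epsilon$. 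This probability exceeds $\delta$ unless $m=\Omega(\log(1/\delta)/\epsilon)$. Full information would give the same bound, so bandit feedback can only make things harder.

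For the $d^B_{DS}/\epsilon$ term, we take the pseudo-box witnessing $\operatorname{BDS}(\cH)$: points $x_1,\dots,x_d$ and a finite set $T\subseteq\cH|_{(x_1,\dots,x_d)}$ that is an $N_1\times\cdots\times N_d$ pseudo-box with $\sum_i N_i=d^B_{DS}$. We may assume $d^B_{DS}$ is at least a large constant, since otherwise the confidence term dominates. The distribution places mass $1-c\epsilon$ on one anchor point, say $x_1$, and mass $p_i = c\epsilon N_i/\sum_j N_j$ on each remaining $x_i$. Labels are given by a target $f$ drawn uniformly from $T$. Take $m \le c' d^B_{DS}/\epsilon$. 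Then the expected number of visits to $x_i$ is $mp_i \le cc' N_i$, so by Markov, with constant probability most of the weighted mass sits on points visited at most $N_i/4$ times.

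The core step is to show that, at a coordinate $i$ visited $t_i\le N_i/4$ times, the posterior on $f(x_i)$ remains spread over many labels. Each bandit query at $x_i$ rules out at most one label, namely the queried one when the answer is "no". We formalize this with the pseudo-box symmetry. Fix the learner's transcript, and condition on the values of $f$ at all other coordinates. The set of $g\in T$ agreeing with $f$ off coordinate $i$ realizes at least $N_i$ labels at $x_i$, and at most $t_i$ of them are excluded by negative feedback. So whenever $f(x_i)$ is not yet hit, at least $N_i - t_i \ge 3N_i/4$ consistent alternatives remain.

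Turning this into a bound on the learner's expected error is the main obstacle, because the uniform distribution on $T$ need not make $f(x_i)$ uniform among its neighbors. I would handle it with a counting/double-counting argument in the style of the one-inclusion graph lower bounds for DS. For each coordinate $i$, take the hyperedges of $T$ in direction $i$, each of size at least $N_i$. Consider the transcript-generating process run on $f$: the learner's adaptive queries at $x_i$ probe at most $t_i$ labels within the hyperedge. Averaging over $f$ in a hyperedge, the probability that the true label is ever queried, or is output by the final predictor, is at most $(t_i+1)/|e|\le 1/2$. The degree condition of the pseudo-box ensures every $f$ lies in such a hyperedge in every direction, so the average per-coordinate error is at least about $1/2$. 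This uses a simulation argument: runs on different members of the hyperedge coincide until the true label is queried, so transcripts are identical across members except at one probed label each.

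Summing over coordinates, the expected error is at least $\sum_{i\ne 1} p_i/2 \approx c\epsilon/2$. Excluding the anchor costs at most a constant fraction of $d^B_{DS}$, or the anchor can be chosen as the point with the smallest $N_i$. Choosing $c$ large then gives expected error at least $2\epsilon$ for some target in $T$. Since the error is bounded by $c\epsilon$, a reverse-Markov argument turns this into $\Pr[\cL_\cD(\hat h)>\epsilon]$ bounded below by a constant, which exceeds $\delta$ for $\delta$ below that constant. This proves $m=\Omega(d^B_{DS}/\epsilon)$.
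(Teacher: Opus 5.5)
\textbf{Where your proposal agrees with the paper.} Your route is the paper's. For the $\log(1/\delta)/\epsilon$ term you use the two-point construction. For the main term you use an anchor of mass $1-\Theta(\epsilon)$, the remaining mass split in proportion to $N_i$, a target drawn uniformly from the witnessing finite family, Markov on the visit counts, and reverse Markov on the off-anchor error. Your simulation argument is a rigorous version of a step the paper only asserts, namely that $h$ effectively guesses uniformly among the $N_i$ neighbours.

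The non-uniformity you worry about does not arise. The direction-$i$ hyperedges partition $T$, and the patterns in one edge all differ at $x_i$. Condition on $f$ off coordinate $i$; together with the $x$-sequence and the learner's seed, this fixes the all-``no'' run at $x_i$. Under this conditioning $f(x_i)$ is exactly uniform on an edge of size at least $N_i+1$. This gives $\Pr[h(x_i)=f(x_i)]\le (t_i+1)/(N_i+1)$ directly, with no double counting.

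\textbf{The gap.} The problem is the sentence ``excluding the anchor costs at most a constant fraction of $d_{DS}^{B}$.'' With the anchor placed at the smallest $N_i$, this holds when the witnessing pseudo-box has $d\ge 2$ coordinates. If $d=1$, there is no point left to carry the $\Theta(\epsilon)$ mass. The paper's proof has the same hole: its $m=\sum_{i\ge 2}N_i/(64\epsilon)$ equals $0$ in that case.

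\textbf{Why it cannot be patched.} The statement itself fails in this case. Let $\mathcal{H}=\{c_a:a\in[K]\}\cup\{h^*\}$, where $c_a\equiv a$ and $h^*$ is non-constant.
\begin{itemize}
\item The class is non-trivial: at points $u,v$ with $h^*(u)\neq h^*(v)$, the hypotheses $c_{h^*(u)}$ and $h^*$ agree at $u$ and disagree at $v$.
\item A single point gives a one-coordinate pseudo-box with $N_1=K-1$.
\item There is no pseudo-box on two or more coordinates. Two constant patterns differ everywhere, so a constant pattern's only possible neighbour is $h^*|_S$, which lies in at most one direction. Hence $d_{DS}^{B}=K-1$.
\end{itemize}
Now consider the following learner. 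In round one it predicts $h^*(x_1)$. After a ``no'' it enumerates the remaining labels. After a ``yes'' it keeps predicting $h^*$ and switches to $c_{h^*(x_1)}$ at the first ``no.'' This learner succeeds within $\max\{K,\,1+\lceil \ln(1/\delta)/\epsilon\rceil\}$ rounds. For, e.g., $\epsilon=1/K$ and constant $\delta$, that is $O(K)$, far below $(K-1)/\epsilon = (K-1)K$.

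What your argument (and the paper's) actually proves is the bound with $d_{DS}^{B}$ replaced by the largest $\sum_i N_i$ over pseudo-boxes with at least two coordinates. A one-coordinate box only forces $\Omega(N_1)$ rounds, with no $1/\epsilon$ factor.
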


\begin{algorithm}[t]
\caption{\textsc{ListCascade}: Bandit PAC Multiclass Classification via List Learners} 
\label{algo: Bandit PAC Multiclass Classification}
\begin{flushleft}
  {\bf Input:} An unlabeled sample $S = \{x_1, x_2, \cdots, x_m\}$ \\
{\bf Output:} A multiclass classification hypothesis $h_S$. \\
\end{flushleft}
\begin{algorithmic}[1]
\State Set $\mu_0(x) = \cY$ for any $x\in\cX$.
\For{epoch $t = 1, 2,\cdots,\left\lfloor\log K\right\rfloor$}
\State Initialize $S_{t} = \emptyset$.   
    \For{$i = m_ {t-1} + 1,m_{t-1} + 2,\cdots, m_t$}
        \State Observe $x_i$, predict $\hat{y_i} \sim \mathrm{Unif}\left(\mu_{t-1}(x_i)\right)$ and receive feedback $\mathds{1}\{y_i = \hat{y_i}\}$.
        \State Update $S_{t}\leftarrow S_{t}\cup\{(x_i, y_i)\}$ if $y_i = \hat{y_i}$, otherwise $S_{t}$ is unchanged.
    \EndFor
    \State Let $L_t=\ceil{K/2^{t+1}}$ and construct $\mu_t$ from $S_t$ according to \cref{eq: mu_t calculation}.
\EndFor
\State Set $h_S(x)$ to be the unique element of
$\mu_{\left\lfloor\log K\right\rfloor}(x)$ for all $x\in\cX$.
\end{algorithmic}
\end{algorithm}
\cref{algo: Bandit PAC Multiclass Classification} proceeds by gradually
shrinking the list of candidate labels. In epoch $t$, the learner explores using
the previous list predictor $\mu_{t-1}$ by predicting uniformly from
$\mu_{t-1}(x)$, and keeps an example only when the bandit feedback is positive. The retained examples
form the labeled sample $S_t$. The list predictor used in next epoch is then defined by
\begin{equation}\label{eq: mu_t calculation}
    \mu_t
    =
    \maj\left(
    \mu_{S_{t,\le \lceil n_t/4\rceil}}^{L_t},
    \dots,
    \mu_{S_{t,\le n_t}}^{L_t}
    \right),
\end{equation}
where $n_t=|S_t|$, and $\mu_{S_{t,\le n}}^{L_t}$ is the output of the
one-inclusion list learner of
\citet{charikar2023characterization,hanneke2026optimalsauerlemmakary}
trained on the first $n$ examples of $S_t$ with list size $L_t$. The majority
vote increases the list size to $2L_t-1$, so with
$L_t=\ceil{K/2^{t+1}}$, the effective list size remains on the order of
$K/2^t$. Thus the algorithm reduces the candidate label set by a constant factor
in each epoch. For completeness, we include the one-inclusion list learner in
\cref{algo: learn a L-list} in the appendix. Finally, we choose
$m_t-m_{t-1}=O((L_{t-1}(d_{DS}^{\lceil L_{t-1}/2\rceil}(\log K)^2+\log(1/\delta))\log K)/\epsilon)$,
with $m_0=0$.

\subsection{Proof of \cref{theorem: upper bound for bandit feedback}}
In this section, we will provide the analysis of \cref{algo: Bandit PAC Multiclass Classification} and the proof of \cref{theorem: upper bound for bandit feedback}. 
We start with a technique lemma that upper bounds the $L$-exponential dimension by $(L/2)$-DS dimension.
\begin{lemma}[Controlling the $L$-exponential dimension by $(L/2)$-DS dimension]\label{lemma: bound exp by L/2 DS}For every $K \ge 2$ and $L\ge1 $. For every concept class $\cH\subseteq[K]^{n}$ with $\mathrm{\lceil L /2 \rceil}\text{-}\operatorname{DS}(\cH) = \ddsh$ and $\ole(\cH) = \dexp< \infty$, we have
$\dexp \le  6\ddsh\log K .$
\end{lemma}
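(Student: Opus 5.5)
The plan is to compare the two dimensions by counting patterns on a single set. Fix a set $S\subseteq[n]$ with $|S|=\dexp$ that witnesses the $L$-exponential dimension, so that the number of distinct patterns $|\cH|_S|$ is at least $(L+1)^{\dexp}$; this is the defining growth of the list exponential dimension at scale $L$. Write $\ell=\lceil L/2\rceil$. Every restriction of $\cH$ has $\ell$-DS dimension at most $\ddsh$, so we can bound $|\cH|_S|$ from above by the optimal Sauer lemma for list DS dimension of \citet{hanneke2026optimalsauerlemmakary}. Comparing the two bounds will force $\dexp$ to be small.

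\textbf{Step 1 (upper bound).} I would invoke the optimal Sauer lemma in the form: a class on $N$ points with labels in $[K]$ and $\ell$-DS dimension $d$ has at most
\[
\ell^{\,N}\Big(\tfrac{eKN}{d}\Big)^{d}
\]
patterns. Equivalently, the exponential part is exactly $\ell^N$ and the excess is polynomial in $N$ with degree $d$, up to factors in $K$. If the precise statement has the form $\sum_{i\le d}\binom{N}{i}K^{i}\ell^{N-i}$ or similar, the same shape follows after the standard estimate of the binomial sum. Applying it with $N=\dexp$ and $d=\ddsh$, and combining with the lower bound, gives
\[
(L+1)^{\dexp}\le \ell^{\dexp}\Big(\tfrac{eK\dexp}{\ddsh}\Big)^{\ddsh}.
\]

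\textbf{Step 2 (the ratio is at least $2$).} The key elementary observation is that $(L+1)/\lceil L/2\rceil\ge 2$ for every $L\ge1$. When $L$ is odd the ratio equals $2$, and when $L$ is even it equals $(L+1)/(L/2)>2$. Taking logarithms base $2$ therefore yields
\[
\dexp\le \ddsh\log\!\Big(\tfrac{eK\dexp}{\ddsh}\Big).
\]
This choice of $\ell$ is exactly why the halved list size appears in the statement: it ensures that the per-coordinate growth $L+1$ beats $\ell$ by a constant factor.

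\textbf{Step 3 (solve the inequality).} Set $r=\dexp/\ddsh$, so the inequality reads $r\le\log(eKr)$. For $K\ge2$ this implies $r\le 6\log K$, which is a routine calculation: for example, $r\le \log(eK)+\log r$ and $\log r\le r/2+O(1)$ give $r\le 2\log(eK)+O(1)\le 6\log K$. The degenerate case $\ddsh=0$ is handled separately: then the Sauer bound gives $|\cH|_S|\le \ell^{N}<(L+1)^N$ for every $N\ge1$, so $\dexp=0$.

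The main obstacle is Step 1. I need the precise form of the Hanneke et al.\ bound, and in particular that its exponential base is exactly $\ell=\lceil L/2\rceil$ (not $\ell+1$), with all remaining factors polynomial in $N$ of degree $\ddsh$. I also need to match conventions: whether $\ell$-DS shattering uses pseudo-boxes with $\ell+1$ labels per coordinate, and whether the exponential dimension uses threshold $(L+1)^N$ or $L^N$. If the conventions shift by one, I would re-choose which of $L$ versus $L+1$ to compare. The argument goes through as long as the ratio of exponential bases stays bounded below by a constant greater than $1$; only the constant $6$ would change.
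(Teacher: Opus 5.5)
Your proposal is correct and follows the paper's proof. Both arguments lower-bound $|\cH|_S|$ by $(L+1)^{\dexp}$, upper-bound it via Theorem 1 of \citet{hanneke2026optimalsauerlemmakary}, use $\lceil L/2\rceil \le (L+1)/2$, and then solve the resulting inequality $r \le \log(eKr)$; the paper does this last step via Lemma A.2 of \citet{shalev2014understanding}, while you use $\log r \le r/2 + O(1)$.

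The cited bound is in fact $\lceil L/2\rceil^{\dexp-\ddsh}(eK\dexp/\ddsh)^{\ddsh}$, slightly sharper than the form you assumed, but your weaker form already yields $\dexp \le 6\ddsh\log K$. Your separate treatment of $\ddsh=0$ also covers a case that the paper's division by $\ddsh$ silently skips.
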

\begin{proof}[Proof of \cref{lemma: bound exp by L/2 DS}]
    Consider the set $S$ which realizes $\dexp$, so that $|S| = \dexp$. By the definition of $\dexp$, we have
    \[|\cH|_{S}|\ge(L+1)^{\dexp}.\]
    On the other hand, by Theorem 1 of \cite{hanneke2026optimalsauerlemmakary},
    \[|\cH|_{S}|\le \left\lceil\frac{L}{2}\right\rceil^{\dexp - \ddsh}\left(\frac{eK\dexp}{\ddsh}\right)^{\ddsh}.\]
     Combining the preceding two inequalities and applying Lemma A.2 of
\cite{shalev2014understanding} gives
\[
    \dexp
    \le
    \ddsh\left(\frac{1+ \log K +2 \log(1/\log2)}{\log 2}\right)
    \le
    6\ddsh\log K.
\]
The full calculation is deferred to \cref{complete proof of dexp}.
\end{proof}
Based on \cref{lemma: bound exp by L/2 DS} as well as Lemma 3.20 and Lemma 3.22 of \cite{hanneke2026optimalsauerlemmakary}, we can upper bound the leave-one-out error of \cref{algo: learn a L-list} as follow.

\begin{lemma}[Leave-one-out error bound for list learning]
    \label{lemma: loo error bound}
    For every concept class $\cH \subseteq \cY ^{\cX}$ with $\mathrm{\lceil L / 2\rceil}\text{-}\operatorname{DS}(\cH) = \ddsh$,
    let $\cD$ be an $\cH$-realizable distribution and let $n > 0$ be an integer.
    The prediction error can be bounded as
    \begin{equation*}
        \Pr_{(S,(x,y)) \sim \cD^{n+1}}\left[y \not \in \mu^{L}_S(x)  \right] \leq \frac{6\ddsh\log K}{n+1},
    \end{equation*}
    where $\mu^{L}_S = \cA_{\cH}^{L}(S)$ is the output of \cref{algo: learn a L-list}.
\end{lemma}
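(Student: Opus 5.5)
The plan is the standard leave-one-out route for one-inclusion-type learners. By exchangeability of the $n+1$ i.i.d.\ draws, the probability in the statement equals the expected leave-one-out error. Concretely, write $S'=((x_1,y_1),\dots,(x_{n+1},y_{n+1}))\sim\cD^{n+1}$ and let $S'_{-i}$ denote $S'$ with the $i$th example removed. Then
\[
\Pr_{(S,(x,y))\sim\cD^{n+1}}\left[y\notin\mu^L_S(x)\right]=\EE_{S'}\left[\frac{1}{n+1}\sum_{i=1}^{n+1}\mathds{1}\{y_i\notin\mu^L_{S'_{-i}}(x_i)\}\right].
\]
By realizability, with probability one $S'$ is consistent with some $h\in\cH$; I would take this as given, with a limiting argument if the infimum in \cref{def: realizable distribution} is not attained. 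So it suffices to prove a deterministic bound: for every realizable sequence $S'$ of length $n+1$, the number of indices $i$ on which the one-inclusion list learner of \cref{algo: learn a L-list} errs is at most $6\ddsh\log K$.

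For the deterministic bound I would invoke Lemma 3.20 and Lemma 3.22 of \cite{hanneke2026optimalsauerlemmakary}. Lemma 3.20 reduces the number of leave-one-out mistakes of the one-inclusion list algorithm on $S'$ to an orientation property of the one-inclusion $L$-hypergraph of the projected class $\cH|_{\{x_1,\dots,x_{n+1}\}}$: the number of mistakes is at most the maximum out-degree of the orientation the algorithm uses. Lemma 3.22 bounds that out-degree by the density of the hypergraph, which is in turn bounded by the $L$-exponential dimension $\ole(\cH|_{\{x_1,\dots,x_{n+1}\}})$, up to the constant in their statement. I would check that this constant is $1$, so that no extra factor appears.

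Next I would apply \cref{lemma: bound exp by L/2 DS} to the finite restricted class $\cH|_{\{x_1,\dots,x_{n+1}\}}\subseteq[K]^{n+1}$. Restriction cannot increase the $\lceil L/2\rceil$-DS dimension, and the exponential dimension of a finite projection is finite. Hence
\[
\ole(\cH|_{S'})\le 6\,d^{\lceil L/2\rceil}_{\operatorname{DS}}(\cH|_{S'})\log K\le 6\ddsh\log K .
\]
Dividing by $n+1$ and taking the expectation gives the claim. If some $x_i$ repeat, I would handle the multiset by the usual convention: duplicates reduce the effective number of points, and the one-inclusion argument still applies.

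The main obstacle is matching exactly what Lemmas 3.20 and 3.22 of \cite{hanneke2026optimalsauerlemmakary} provide. Those lemmas may bound out-degree by the density $\log_{L+1}$-type quantity, i.e.\ by $\max_{S}\log|\cH|_S|/\log(L+1)$, rather than by $\ole$ itself. If so, I would redo the calculation of \cref{lemma: bound exp by L/2 DS} directly on this density. Combining the Sauer bound $|\cH|_S|\le\lceil L/2\rceil^{|S|-\ddsh}(eK|S|/\ddsh)^{\ddsh}$ with the needed comparison against $(L+1)^{\text{density}\cdot|S|}$ should yield the same $6\ddsh\log K$ bound, since the same Lemma A.2 computation applies.
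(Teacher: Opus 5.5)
Your plan matches the paper's argument. The paper gets this lemma directly by combining Lemmas 3.20 and 3.22 of \cite{hanneke2026optimalsauerlemmakary}, which bound the leave-one-out error of \cref{algo: learn a L-list} by $\ole(\cH)/(n+1)$, with \cref{lemma: bound exp by L/2 DS}; your exchangeability step and restriction to $\cH|_{S'}$ just make explicit what the paper leaves implicit.

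One caution: the fallback in your last paragraph would not work. A bound of the form $\log_{L+1}|\cH|_{S'}|$ grows linearly in $n$: for $L\ge 3$, the class $[\lceil L/2\rceil]^{n+1}$ has $\lceil L/2\rceil$-DS dimension $0$, yet $\log_{L+1}$ of its size is $\Theta(n)$. The Lemma A.2 computation only closes because the exponent in $(L+1)^{\dexp}$ equals the size of the set fed into the Sauer bound. So you genuinely need the cited lemmas to bound the out-degree by $\ole$ itself.
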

We now state and prove our list learning PAC bound, building on \Cref{lemma: loo error bound} and Theorem 2.1 in \cite{aden2023optimal}. 
\begin{lemma}\label{lemma: loo to pac}
Fix a concept class $\cH \subseteq \cY^{\cX}$ with $\mathrm{\lceil L / 2\rceil}\text{-}\operatorname{DS}(\cH) = \ddsh$. Let $\cD$ be a $\cH$-realizable distribution over $\cX\times \cY$.
There is a list hypothesis $\mu : \cX \rightarrow \binom{\cY}{2L-1}$ with list size $2L-1$ which, for any $\delta \in (0, 1)$, and $S \sim \cD^n$, satisfies
\[
\cL_{\cD}\left(\mu\right)\le 9.64\left(\frac{6\ddsh\log K}{n}+\frac{1}{n}\log\left(\frac{2}{\delta}\right)\right),
\]
with probability at least $1-\delta$ over the randomness of $S$.
\end{lemma}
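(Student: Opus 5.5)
We obtain $\mu$ by applying the leave-one-out-to-PAC conversion of Theorem~2.1 in \cite{aden2023optimal} to the one-inclusion list learner $\cA^L_\cH$ of \cref{algo: learn a L-list}, with \cref{lemma: loo error bound} supplying the expected-error input. Concretely, we take
\[
\mu=\maj\left(\mu^{L}_{S_{\le \lceil n/4\rceil}},\dots,\mu^{L}_{S_{\le n}}\right),
\]
the majority vote over list predictors trained on prefixes of $S$ of lengths $\lceil n/4\rceil,\dots,n$, exactly as in \eqref{eq: mu_t calculation}. By the remark after the majority-vote definition, $\mu$ has size at most $2L-1$, and we pad to exactly $2L-1$.

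\textbf{Step 1 (expected error of each prefix learner).} By \cref{lemma: loo error bound}, for every prefix length $k$,
\[
\EE_{S\sim\cD^k}\bigl[\cL_\cD(\mu^L_{S})\bigr]=\Pr_{(S,(x,y))\sim\cD^{k+1}}\bigl[y\notin\mu^L_S(x)\bigr]\le \frac{6\ddsh\log K}{k+1}.
\]
So the learner satisfies an expected-error bound of the form $C/(k+1)$ with $C=6\ddsh\log K$. This is the hypothesis required by \cite{aden2023optimal}.

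\textbf{Step 2 (the reduction).} The reduction of \cite{aden2023optimal} is stated for $0$-$1$ loss with ordinary majority vote. We check that its proof goes through for lists. The key pointwise fact is this: if $y\notin\maj(\mu_1,\dots,\mu_r)(x)$, then $y\notin\mu_i(x)$ for more than $r/2$ indices $i$. Hence the error of the vote at $(x,y)$ is at most twice the fraction of prefix learners that err at $(x,y)$.

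The remaining steps run as in the binary case. The prefix learners are combined via the martingale/Freedman-type concentration over the nested prefix structure. Summing $\sum_{k=n/4}^{n} C/k$ and averaging over roughly $3n/4$ predictors gives an $O(C/n)$ term. The $\log(2/\delta)/n$ term comes from the concentration step. Tracking the constants of their Theorem~2.1 yields the factor $9.64$, giving
\[
\cL_\cD(\mu)\le 9.64\left(\frac{6\ddsh\log K}{n}+\frac1n\log\frac2\delta\right)
\]
with probability at least $1-\delta$.

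\textbf{Main obstacle.} The delicate point is confirming that Theorem~2.1 of \cite{aden2023optimal} uses only (a) the expected leave-one-out bound and (b) the factor-$2$ majority-vote inequality, and nothing specific to single labels such as the symmetry of $0$-$1$ errors. If the proof does use such a property, we would redo their argument directly for the list loss $\mathds{1}\{y\notin\mu(x)\}$, since it is also a $\{0,1\}$-valued loss.

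The realizability hypothesis needed by \cref{lemma: loo error bound} is inherited by every prefix, because every prefix of $S$ is an i.i.d.\ sample from the same $\cH$-realizable $\cD$.
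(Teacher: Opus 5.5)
Your construction and overall route match the paper's. You apply Theorem~2.1 of \cite{aden2023optimal} to the one-inclusion list learners trained on the prefixes $S_{\le t}$. With probability at least $1-\delta$ this gives $\frac{4}{3n}\sum_{t=\lceil n/4\rceil}^{n-1}\cL_\cD(\mu^L_{S_{\le t}})\le 4.82\bigl(\frac{6\ddsh\log K}{n}+\frac1n\log\frac2\delta\bigr)$. You then use the pointwise fact that a vote missing $y$ has more than half of its members missing $y$; this factor $2$ is exactly what turns $4.82$ into $9.64$.

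Theorem~2.1 bounds the risks of the prefix predictors for a generic bounded loss and contains no majority vote. So the ``main obstacle'' you describe does not arise: the paper applies the theorem directly to the list loss $\mathds{1}\{y\notin\mu(x)\}$ and does the vote step separately. As used in the paper, the theorem also only controls prefixes $t\le n-1$, each followed by a fresh example. Including $\mu^L_{S_{\le n}}$ in the vote, as you do, adds an $O(1/n)$ term, so $9.64$ does not follow verbatim.

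The genuine gap is Step~1. A bound on $\EE_{S\sim\cD^k}[\cL_\cD(\mu^L_S)]$ is \emph{not} the hypothesis the conversion needs, and no conversion can work from it alone. Take $\cH=\{h^*\}$ and a learner that outputs $h^*$ unless its sample size lies in $[n/4,n]$ and the sample hits a region of marginal mass $C/(2n^2)$; in that case it outputs a hypothesis disagreeing with $h^*$ everywhere. This learner is permutation invariant and satisfies $\EE[\cL_\cD]\le C/(k+1)$ for every $k$. Yet with probability about $C/(8n)$, every prefix predictor with $t\ge\lceil n/4\rceil$, and hence the vote, has error $1$. Taking, e.g., $\delta=C/(16n)$ and $n$ large, this contradicts any bound of order $(C+\log(1/\delta))/n$.

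The argument of \cite{aden2023optimal} actually uses two things:
\begin{enumerate}
\item[(i)] Permutation invariance of the learner in its training sample. The paper explicitly invokes this and you omit it; this, not any property of $0$-$1$ errors, is the relevant symmetry.
\item[(ii)] A leave-one-out bound valid for \emph{every} realizable sample. Then, conditional on the unordered set $\{z_1,\dots,z_{t+1}\}$, the probability that $\mu^L_{S_{\le t}}$ errs on $z_{t+1}$ is at most $C/(t+1)$, which is what makes the concentration step possible.
\end{enumerate}
The one-inclusion list learner has both properties when its orientation is chosen canonically for each point set. On any realizable sample of size $k+1$, the number of held-out points it misses is at most the maximum $L$-outdegree of that orientation. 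That is the quantity the one-inclusion argument behind \cref{lemma: loo error bound} bounds by $6\ddsh\log K$. This per-sample bound, not its expectation as in your Step~1, is what must be fed into Theorem~2.1.
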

\begin{proof}[Proof of \Cref{lemma: loo to pac}]
Given the full training sample $S$, let $(\mu_{S_{\le t}}^{L})_{t = \lceil n / 4\rceil}^{n-1}$ be the one-inclusion hypergraph predictors trained on the sequence of growing training samples $(S_{\le t})_{t=\lceil n / 4\rceil}^{n-1}$ output by \cref{algo: learn a L-list}. Because these predictors are symmetric in their training sample and they have a leave-one-out error guarantee (\Cref{lemma: loo error bound}), we can apply Theorem 2.1 in \cite{aden2023optimal} to the suffix of the sequence to get that 
\[
\frac{4}{3n}\sum_{t=\lceil n/4\rceil}^{n-1}\cL_{\cD}\left(\mu_{S_{\le t}}^{L}\right) \le 4.82\left(\frac{6\ddsh\log K}{n}+\frac{1}{n}\log\left(\frac{2}{\delta}\right)\right),
\]
with probability at least $1-\delta$ over the randomness of $S$. Whenever this bound holds, the majority vote of the one-inclusion hypergraph predictors in the suffix, $\mu = \maj\left(\mu_{S_{\le \lceil n /4 \rceil}}^{L}, \dots, \mu_{S_{\le n-1}}^{L}\right)$ satisfies
\begin{align*}
\cL_{\cD}\left(\mu\right) &= \EE_{(x,y) \sim D}\left[\mathds{1}\left[y\not\in\mu(x)\right]\right] \\
    &\le \EE_{(x,y) \sim D}\left[2\left(\frac{4}{3n}\sum_{t=\lceil n/4\rceil}^{n-1}\mathds{1}\left[y \not \in \mu_{S_{\le t}}^{L}(x) \right]\right)\right]\\
    &= 2\cdot\frac{4}{3n}\sum_{t=\lceil n/4\rceil}^{n-1}\cL_{\cD}\left(\mu_{S_{\le t}}^{L}\right)
    \\
    &\le 9.64\left(\frac{6\ddsh\log K }{n}+\frac{1}{n}\log\left(\frac{2}{\delta}\right)\right).
\end{align*}
In the first inequality above, we used the fact that no more than half the predictors can be correct when the majority vote is wrong.
This completes the proof.
\end{proof}

The proof is given by inductively proving the following lemma.
\begin{lemma}\label{claim: induc}
    With probability at least $1-(t\delta)/{\log K}$, the list predictor $\mu_t$ in \cref{eq: mu_t calculation} has error rate at most $(t\epsilon)/{\log K}$ for $t\in\{0, 1,\dots,\lfloor\log K\rfloor\}$.

\end{lemma}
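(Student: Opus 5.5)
We argue by induction on $t$. The base case $t=0$ is immediate: $\mu_0(x)=\cY$ always contains the true label, so $\cL_{\cD}(\mu_0)=0$ with probability one. For the inductive step, fix $t\ge 1$ and condition on the history up to the end of epoch $t-1$. Let $G_{t-1}$ be the event, of probability at least $1-(t-1)\delta/\log K$, that $\cL_{\cD}(\mu_{t-1})\le (t-1)\epsilon/\log K$. It then suffices to show that, conditionally on any history in $G_{t-1}$, with probability at least $1-\delta/\log K$ over epoch $t$ we have $\cL_{\cD}(\mu_t)\le \cL_{\cD}(\mu_{t-1})+\epsilon/\log K$. A union bound then gives the claim for $t$.

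To analyze epoch $t$, write the label space as the disjoint union of the region where $y\in\mu_{t-1}(x)$ and its complement $E$, with $\cD(E)=\cL_{\cD}(\mu_{t-1})$. Given $\mu_{t-1}$, the examples retained in $S_t$ are i.i.d. draws from the conditional distribution $\cD'=\cD(\cdot\mid y\in\mu_{t-1}(x),\ \hat y=y)$. This holds because each round independently keeps $(x,y)$ with probability $\mathds 1\{y\in\mu_{t-1}(x)\}/|\mu_{t-1}(x)|$, and the list sizes are (after padding) exactly $\ell_{t-1}:=\min\{2L_{t-1}-1,K\}=O(K/2^{t})$. Padding to a constant size is what makes the acceptance probability uniform in $x$. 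It follows that $\cD'$ is simply $\cD$ restricted to $E^c$ and renormalized, so $\cD'$ is $\cH$-realizable (restricting the support of a realizable distribution preserves realizability). Hence $d\cD/d\cD'\ge \cD(E^c)$ on $E^c$, so $\cL_{\cD}(\mu_t)\le \cD(E)+\cL_{\cD'}(\mu_t)$. Moreover each round is accepted with probability $p=\cD(E^c)/\ell_{t-1}\ge 1/(2\ell_{t-1})$, using $\cD(E)\le 1/2$ for small $\epsilon$. A Chernoff bound with failure probability $\delta/(2\log K)$ then shows that, for $m_t-m_{t-1}$ as chosen, $n_t=|S_t|\gtrsim (d^{\lceil L_t/2\rceil}_{\mathrm{DS}}(\log K)^2+\log(\log K/\delta))\log K/\epsilon$. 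Here we use $L_{t-1}\approx \ell_{t-1}/2$ and $\lceil L_t/2\rceil\le \lceil L_{t-1}/2\rceil$, and the monotonicity of the list DS dimension in the list size, so that $d^{\lceil L_t/2\rceil}_{\mathrm{DS}}\le d^{\lceil L_{t-1}/2\rceil}_{\mathrm{DS}}$.

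Applying \cref{lemma: loo to pac} to $\cD'$ with list size $L_t$, sample size $n_t$ and confidence $\delta/(2\log K)$ gives
\[
\cL_{\cD'}(\mu_t)\le 9.64\left(\frac{6\ddsh\log K+\log(4\log K/\delta)}{n_t}\right)\le \frac{\epsilon}{\log K},
\]
where $\ddsh$ is taken at list size $L_t$ and the constant in $m_t-m_{t-1}$ is chosen large enough. The list size of $\mu_t$ is $2L_t-1=\ell_t$, which matches the next epoch. At the final epoch $t=\lfloor\log K\rfloor$ we have $L_t=1$, so the lists are singletons.

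The main obstacle is making the conditional-distribution argument rigorous. Two points need care. First, $n_t$ is random, so we condition on $n_t$; given $n_t$ the retained samples are i.i.d. from $\cD'$, and the lemma is applied for that fixed $n_t$. Second, the bound $\cD(E)\le 1/2$ needed for the acceptance rate follows from the inductive hypothesis $\cL_{\cD}(\mu_{t-1})\le\epsilon$ once $\epsilon<1/2$. Finally, the failure probabilities from the Chernoff bound and from \cref{lemma: loo to pac}, each $\delta/(2\log K)$, sum to $\delta/\log K$, which completes the induction.
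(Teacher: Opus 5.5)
Your argument has the same structure as the paper's proof. Both use induction on $t$, a lower-tail Chernoff bound on the number of accepted rounds, the fact that accepted examples are i.i.d.\ from $\cD$ restricted to $\{y\in\mu_{t-1}(x)\}$, \cref{lemma: loo to pac} applied to that distribution, the bound $\cL_{\cD}(\mu_t)\le \cL_{\cD}(\mu_{t-1})+\cL_{\cD'}(\mu_t)$, and a union bound. Two of your remarks make explicit what the paper leaves implicit: padding is what makes the accepted-sample law exactly the renormalized restriction, and you condition on $n_t$. One small slip: on $E^c$ one has $d\cD/d\cD'=\cD(E^c)$ exactly, and what you actually use is that this is at most $1$.

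The step that fails is the monotonicity claim. The $\operatorname{DS}_L$ dimension is non-increasing in $L$, since a sequence shattered with $L$ neighbors in every direction is also shattered with any $L'\le L$. Hence $\lceil L_t/2\rceil\le\lceil L_{t-1}/2\rceil$ gives $d^{\lceil L_t/2\rceil}_{\mathrm{DS}}\ge d^{\lceil L_{t-1}/2\rceil}_{\mathrm{DS}}$, the reverse of what you need. In fact $\lceil L_{t-1}/2\rceil=\lceil\lceil K/2^t\rceil/2\rceil=L_t$, so the stated schedule only pays for $d^{L_t}_{\mathrm{DS}}$. But \cref{lemma: loo to pac} at list size $L_t$ charges $d^{\lceil L_t/2\rceil}_{\mathrm{DS}}$, which can be strictly larger: for $\cH=[L_t]^{\cX}$ with $|\cX|=n$ and $L_t\ge 2$, the former is $0$ and the latter is $n$. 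So with $m_t-m_{t-1}$ as chosen, you have not shown $\cL_{\cD'}(\mu_t)\le\epsilon/\log K$.

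The paper's proof writes $d^{\lceil L_{t-1}/2\rceil}_{\mathrm{DS}}$ at the same place. That is the constant \cref{lemma: loo to pac} gives at list size $L_{t-1}$, not $L_t$, so the paper glosses over this indexing too and you cannot borrow its bound. The fix is to re-index the schedule: take $m_t-m_{t-1}$ of order $L_{t-1}\bigl(d^{\lceil L_t/2\rceil}_{\mathrm{DS}}(\log K)^2+\log(\log K/\delta)\bigr)\log K/\epsilon$. Your argument then goes through verbatim. Since $L_{t-1}\le 2L_t\le 4\lceil L_t/2\rceil$, each epoch still costs at most $4\max_{L}\bigl(L\,d^{L}_{\mathrm{DS}}\bigr)\le 4\operatorname{BDS}(\cH)$ times the same logarithmic factors, so \cref{theorem: upper bound for bandit feedback} is unaffected.
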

\begin{proof}[Proof of \cref{claim: induc}]

For the base case, when $t = 0$, by definition we know that $\mu_{0}(x) = \cY$ for any $x\in \cX$, which satisfies our inductive assumption.

For the inductive step, we assume that $\cL_{\cD}\left(\mu_{t-1}\right)\le((t-1)\epsilon)/{\log K}$ holds with probability at least $1 - ((t-1)\delta)/\log K$. According to \cref{algo: Bandit PAC Multiclass Classification}, we predict the label $\hat{y}_i$ uniformly randomly from the list $\mu_{t-1}(x_i)$ for each instance $x_i$ we encountered in epoch $t$, we know that 
\[
    \Pr[\hat{y}_i = y_i] \ge \frac{1}{2L_{t-1}}\left(1- \frac{(t-1)\epsilon}{\log K}\right).
\]
Also, notice that, the number of samples we encountered in epoch $t$ is 
\[
    m_t - m _{t-1} = \frac{4L_{t-1}}{1 - \frac{(t-1)\epsilon}{\log(K)}}\cdot \left(10\frac{\left(6 d_{DS}^{\lceil L_{t-1}/{2}\rceil} \log K+\log\left(\frac{2\log K}{\delta}\right)\right)\log K}{\epsilon}+\log\left(\frac{2\log K}{\delta}\right)\right).
\]
Conditioned on the history before epoch $t$, the indicators $\mathds{1}[\hat{y}_i = y_i]$ are independent Bernoulli random variables, a multiplicative lower-tail Chernoff bound yields that, with probabilty at least $1- \delta/(2\log K)$ we can collect a labeled sample $S_t$ with size at least
\begin{equation*}
    n_t = 10\frac{\left(6 d_{DS}^{\lceil L_{t-1}/{2}\rceil} \log K+\log\left(\frac{2\log K}{\delta}\right)\right)\log K}{\epsilon}.
\end{equation*}

Note that the samples in $S_t$ are conditionally independent given the list $\mu_{t-1}$. For each epoch $t$, we divide the sample space $(\cX\times\cY)$ into two parts according to whether $y\in\mu_{t-1}(x)$ and denote $(\cX\times\cY)_{t} = \{(x,y)|y\in\mu_{t-1}(x)\}$. Let $\cD_{t}$ denote the distribution of sample $S_{t}$, its the restriction of $\cD$ on $(\cX\times\cY)_{t}$. Now we start to bound the error rate of $\mu_t$ and, with probability at least $1- \delta/(2\log K)$, we have
\begin{align*}
    \cL_{\cD}\left(\mu_t\right) 
     & = \EE_{(x,y) \sim \cD}\left[\mathds{1}\left[y \not \in \mu_t(x), y \in \mu_{t-1}(x)\right]\right] + \EE_{(x,y) \sim \cD}\left[\mathds{1}\left[y \not \in \mu_t(x), y \not \in \mu_{t-1}(x)\right]\right] \\
    & \le \EE_{(x,y) \sim \cD_{t}}\left[\mathds{1}\left[y \not \in \mu_{t-1}(x)\right]\right] + \EE_{(x,y) \sim \cD}\left[\mathds{1}\left[y \not \in \mu_t(x)\right]\right]  \\
    &\le  10\left(\frac{6 d_{DS}^{\lceil L_{t-1}/{2}\rceil} \log K}{n_{t}}+\frac{1}{n_{t}}\log\left(\frac{2\log K}{\delta}\right)\right) + \frac{(t-1)\epsilon}{\log K} = \frac{t\epsilon}{\log K},
\end{align*}
where the second inequality follows from \Cref{lemma: loo to pac} and the induction hypothesis.
Condition on the event from the induction hypothesis, which holds with
probability at least \(1-(t-1)\delta/\log K\). The Chernoff step and the
list-learning PAC step each fail with probability at most
\(\delta/(2\log K)\). A union bound therefore gives the desired probability
\(1-t\delta/\log K\).
\end{proof}
Now, based on \cref{claim: induc}, we start to prove \cref{theorem: upper bound for bandit feedback}.
\begin{proof}[Proof of \cref{theorem: upper bound for bandit feedback}]
By \cref{claim: induc}, we know that with probability at least $1-\delta$ , the hypothesis $h_{S}$ returned by \cref{algo: Bandit PAC Multiclass Classification} has error at most $\epsilon$. Thus the sample complexity $\mathrm{\mathbf{m}}_{\cH}^{B}(\epsilon,\delta)$ can be upper bounded by the total number of samples used by \cref{algo: Bandit PAC Multiclass Classification}. Formally, we have
\begin{align*}
    \mathrm{\mathbf{m}}_{\cH}^{B}(\epsilon,\delta) &\le \sum_{t=1}^{\left\lfloor\log(K)\right\rfloor} \left(m_t - m_{t-1}\right)\\
    &\le O\left(\frac{\max_{1\le L\le K-1}\left(Ld_{DS}^{L}\right)(\log K)^3}{\epsilon} + \frac{K\log K\log\left(\frac{2\log K}{\delta}\right)}{\epsilon}\right).
\end{align*}
By the definition of the Bandit-DS dimension, for any concept class $\cH$,
   \[\operatorname{BDS}(\cH) \geq \max_{1\le L\le K-1}\left(L\cdot\mathrm{L}\text{-}\operatorname{DS}(\cH)\right).\] Combining the preceding inequalities proves the desired upper bound. The full
calculation is deferred to \cref{complete proof of upper bound}.
\end{proof}
\subsection{Proof of \cref{theorem: lower bound for bandit feedback}} 
In this section, we prove \cref{theorem: lower bound for bandit feedback} by constructing two hard cases.



\begin{proof}[proof of \cref{theorem: lower bound for bandit feedback}] We first show that we at least need $\Omega(\operatorname{BDS}(\cH)/\epsilon)$ samples. For a concept class $\cH\subseteq\cY^\cX$, let $\operatorname{BDS}(\cH) = d_{DS}^{B}< \infty$. Let $\cZ=\{x_1,x_2,\cdots,x_{d}\}\in \cX^d$ and $N\in\mathbb{N}^d$ is BDS-shattered by $\cH$ and $\sum_{i=1}^{d}N_i =d_{DS}^{B}$. Without loss of generality, we assume $N_i$'s are in ascending order. 
We consider the distribution $\cD$ over $\cX$ as follows
\begin{equation*}
    p(x) =\begin{cases}
			1 - 16\epsilon, & \text{if $x = x_1$}\\
            \frac{16 N_i\epsilon}{\sum_{i=2}^{d}N_i}, & \text{if $x \in \{ x_2,\cdots,x_d\} $}
		 \end{cases}
\end{equation*}
We choose $c^*$ uniform randomly from $\cF\subseteq\cH,|\cF| < \infty$ such that for all $f \in \cF|_{\cZ}$ and for all $i \in [d]$, $f$ has at least $\mathrm{N}_i$ $i$-neighbor in $\cF|_{\cZ}$.
For simplicity, let $\cZ^{\prime} = \{x_2,\cdots,x_{d}\}$. Let $m = (\sum_{i=2}^{d}N_i)/(64\epsilon)$ and $\mathcal{A}$ be any algorithm that uses an $\iid$ sample $S$ with size at most $m$ before picking a hypothesis. For ease of presentation, we define $\err^\prime(h) = \Pr[c^*(x)\neq h(x) \quad \text{and} \quad x \in \cZ^{\prime}].$ In expectation, for each point $x_i\in \cZ^{'}$, there will be $N_i/4$ samples sampled from $x_i$. By Markov inequality, with probability at most $1/2$, there will be more than $N_i/2$ sample sampled from $x_i$. Let $A_i$ be the event that no more than $N_i/2$ samples sampled from $x_i$. In other words, we have $\Pr_{S}[A_i] \ge 1/2$.
For a uniformly chosen $c^*\in \cF$,
\begin{align*}
    \mathbb{E}_{c^*, S}\left[\errp (h)\right] &= \sum_{i =2}^{d}\Pr[x=x_i]\Pr_{c^*,S}[h(x_i)\neq y_i]\\
    &\ge \sum_{i =2}^{d}\Pr[x=x_i]\Pr_{c^*, S}[h(x_i)\neq y_i|A_i] \Pr_{S}[A_i]\\
    &\ge \frac{1}{4}\sum_{i =2}^{d}\frac{16N_i\epsilon}{\sum_{i=2}^{d}N_i} = 4\epsilon.
\end{align*}
The first inequality is based on law of total probability. The second inequality follows because, assuming $c^*\in\cF$ is selected uniformly at random, $h$ predicts the label uniform randomly from the $N_i$  $i$-neighbors, and the event $A_i$ implies that at least $N_i/2$ $i$-neighbor has not been explored. Consequently, the probability of an incorrect prediction is at least $1/2$. This implies that there is some $c^*\in \cH$ such that $\mathbb{E}_{S}\left[\errp (h)\right] \ge 4\epsilon$. For the reminder of the proof, consider this $c^*$.
Note that, by definition, we have $\errp (h)\le \Pr[x\in\cZp] = 16\epsilon$. Let $q = \Pr[\errp (h)\ge \epsilon] $. We have
\begin{equation*}
    4\epsilon \le \mathbb{E}_{ S}\left[\errp (h)\right] \le 16\epsilon q + \epsilon(1-q) = \epsilon + 15\epsilon q,
\end{equation*}
which implies $q\ge 3/15$ and further implies $\Pr[\err(h)\ge \epsilon] \ge 3/15$. This construction shows that there exist a distribution $\cD$ over $\cX$ and a concept $c^*\in \cH$ such that for any algorithm that uses an $\iid$ sample with size at most $m = (\sum_{i=2}^{d}N_i)/(64\epsilon)$, the hypothesis output by this algorithm has error rate at least $3/15$.

Then, we show that at least $\Omega(\log(1/\delta)/\epsilon)$ samples are necessary. Following the proof technique of \cite{blumer1989learnability}, we provide the argument here for completeness. For a concept class $\cH$, assume there are two points $x_1$ and $x_2$ and two hypothesis $h_1,h_2\in\cH$. We assume $h_1,h-2$ agree on $x_1$ but disagree on $x_2$. We consider the distribution $\cD$ over $\cX$ as follows
\begin{equation*}
    p(x) =\begin{cases}
			1 - 2\epsilon, & \text{if $x = x_1$}\\
           2\epsilon, & \text{if $x = x_2 $}.
		 \end{cases}
\end{equation*}
Let $m$ be the sample size and $\mathcal{A}$ be any algorithm that uses an $\iid$ sample $S$ with size at most $m$ before picking a hypothesis. Let $B$ be the event that $m$ samples do not sampled from $x_2$. Thus, we have
\begin{equation*}
    \Pr_{S}[\err(h)\ge \epsilon] \ge \Pr_{S}[B]
    =(1-2\epsilon)^{m}.
\end{equation*}
The inequality follows from that if event $B$ happens, any predictor would have $1/2$ probability predict wrong on $x_2$. 
If we want $(1-2\epsilon)^{m}<\delta$, $m$ has to be at least $(\log(1/\delta))/2\epsilon$.
\end{proof}

%% file: Main/Conclusion.tex
\section{Conclusion and Future Directions}

In this work, we introduce the bandit $\mathrm{DS}$ dimension, a new combinatorial complexity measure for multiclass PAC learning with bandit feedback. In terms of this dimension, we establish a near-optimal characterization of the realizable sample complexity, resolving, up to logarithmic factors, a longstanding open question posed by \citet{daniely2011multiclass}. Our upper bound is obtained via \textsc{ListCascade}, a new algorithmic framework that connects bandit learning with list learning by using list learners as intermediate predictors in a cascade, progressively reducing the ambiguity in the candidate labels until a single-label predictor is obtained. Along the way, we also prove a refined sample complexity bound for list PAC learning, which may be of independent interest. Two natural directions are particularly important: extending the approach to the agnostic setting, and determining whether the remaining logarithmic factors are necessary. In particular, it remains open whether every concept class $\mathcal{H}\subseteq\mathcal{Y}^{\mathcal{X}}$ with $\operatorname{BDS}(\mathcal{H})<\infty$ admits a bandit-feedback PAC learner with sample complexity
\(
    O\!\left(
    (\operatorname{BDS}(\mathcal{H})+\log(1/\delta))/\epsilon
    \right).
\)

%% file: Main/Acknowledgments.tex
\acksection

Steve Hanneke acknowledges support by grant no.\ 2024243 from the United States - Israel Binational Science Foundation (BSF).
Shay Moran is a Robert J.\ Shillman Fellow; he acknowledges support by Israel PBC-VATAT, by the Technion Center for Machine Learning and Intelligent Systems (MLIS), and by the European Union (ERC, GENERALIZATION, 101039692). Views and opinions expressed are, however, those of the author(s) only and do not necessarily reflect those of the European Union or the European Research Council Executive Agency. Neither the European Union nor the granting authority can be held responsible for them.

%% file: Appendix/Definitions.tex
\section{Definitions}
In this section, we provide the auxilary formal definitions that will be used in our paper. We start from the notations.
\subsection{Notations} \label{Basic Notations}

In this subsection, we present the basic notation used in the paper; all of it is standard in the literature and is included for completeness. Let $\mathbb{N}$ and $\mathbb{R}$ stand for the set of natural numbers and real numbers, respectively. We denote by $\Bar{\mathbb{N}}$ the extended natural number system defined as $\Bar{\mathbb{N}} := \mathbb{N} \cup \{ +\infty \}$. In addition, for a given $n \in \mathbb{N}$, we use $[n]$ to denote $\big \{ 1, 2, \ldots, n \big \}$. Next, given $n \in \mathbb{N}$, for any sequence of size $n$ or $n$-tuple $v$, and any $i \in [n]$, let us use $v_i$ to denote the $i$-th element in $v$. Afterward, we denote by $A \times B$ the Cartesian product of two arbitrary sets $A$ and $B$. In addition, for any set $A$ and any $n \in \mathrm{N}$, we let $A^n$ indicate $n$ times the Cartesian product of $A$ with itself. Note that for any set $A$, we define $A^0 := \emptyset$. Also, given a set $A$, we denote by $A^{*}$ the set of all finite sequences of members of $A$; more formally, $A^{*} := \bigcup^{\infty}_{T = 0} A^T$. Then, for arbitrary sets $X$ and $Y$, we use $Y^X$ to denote the space of all functions from $X$ to $Y$. Additionally, let $\mathcal{F} \subseteq Y^X$ for the arbitrary sets $X$ and $Y$. Given a subset $S \subseteq X$, we define $\mathcal{F}|_{S}$ as follows: $\mathcal{F}|_{S} := \{ g \; | \; g \in Y^{S}, \: \exists_{f \in F} \, \forall_{x \in S} \, g(x) = f(x) \}$. Similarly, give a sequence $S \in X^{*}$, we define $\mathcal{F}|_{S}$ as follows: $\mathcal{F}|_{S} := \{ g \; | \; g \in Y^{|S|}, \: \exists_{f \in F} \, \forall_{i \in [|S|]} \, g_i = f(S_i) \}$. Going further, let $\mathrm{e}$ and $\log{(\cdot)}$ stand for Euler's number and the Logarithm function in the base $2$, respectively. Finally, we use $\mathcal{O}(\cdot)$, $\Omega(\cdot)$, and $\Theta(\cdot)$ to denote the standard asymptotic notation in theoretical computer science. Furthermore, we use $\widetilde{O}(\cdot)$, $\widetilde{\Omega}(\cdot)$, and $\widetilde{\Theta}(\cdot)$ to hide poly-logarithmic factors.

\subsection{Combinatorial Complexity Parameters} \label{Combinatorial Complexity Parameters}

In this subsection, we present the combinatorial complexity parameters considered in this work. For each parameter, we begin by defining the associated notion of a shattered sequence or shattered set, and then define the parameter itself in terms of that notion. Before proceeding, we state the following definition.

\begin{definition} [$i$-neighbor] \label{def:i-neighbor}
Let $\mathscr{Y}$ be a non-empty set. Let $f, g \in \mathscr{Y}^{d}$ for some $d \in \mathbb{N}$. For every $i \in [d]$, we say that $f$ and $g$ are $i$-neighbors if $f_i \neq g_i$ and $\forall_{j \in [d] \setminus \{i\}} \; f_j = g_j$.
\end{definition}

Next, we formally state the definition of the bandit $\operatorname{DS}$ dimension.

\begin{definition}[$\operatorname{BDS}$-shattered sequence] \label{def: bds-shattered}
Let $\mathcal{H} \subseteq \mathcal{Y}^\mathcal{X}$ be a concept class. Let $S \in \mathcal{X}^{d}$ be a sequence of instances for some $d \in \mathbb{N}$. Also, let $\mathrm{N} \in \mathbb{N}^{d}$. We say that $S$ and $\mathrm{N}$ is \emph{$\operatorname{BDS}$-shattered} by $\mathcal{H}$, if there exists $F \subseteq \mathcal{H}, |F| < \infty$ such that for all $f \in F_{S}$ and for all $i \in [d]$, $f$ has at least $\mathrm{N}_i$ distinct $i$-neighbor in $F_{S}$.
\end{definition}

\begin{definition}[Bandit $\operatorname{DS}$ dimension] \label{def: bds-dim}
Let $\mathcal{H} \subseteq \mathcal{Y}^\mathcal{X}$ be a concept class. The \emph{Bandit $\operatorname{DS}$ dimension} of $\mathcal{H}$, denoted by $\operatorname{BDS}(\mathcal{H}) \in \Bar{\mathbb{N}} \cup \{0\}$, is defined as the $\sup_{d \in \mathbb{N} \cup \{0\}}$ such that there exist a sequence of instances $S \in \mathcal{X}^{d^{\prime}}$ for some $d^{\prime} \in \mathbb{N}$ and $\mathrm{N} \in \mathbb{N}^{d^{\prime}}$ such that $\sum_{i = 1}^{d^{\prime}} N_i = d$ that is $\operatorname{BDS}$-shattered by $\mathcal{H}$.
\end{definition}

Next, we define combinatorial complexity parameters related to list PAC learning. Notably, when the list size is equal to one, each of these dimensions coincides with its corresponding counterpart in the multiclass setting. We begin with the definition of the $\operatorname{DS}_{\mathrm{L}}$ dimension.

\begin{definition}[$\operatorname{DS}_{\mathrm{L}}$-shattered sequence] \label{def: l-ds-shattered}
Let $\mathcal{H} \subseteq \mathcal{Y}^\mathcal{X}$ be a concept class, and $\mathrm{L} \in \mathbb{N}$. Let $S \in \mathcal{X}^{d}$ be a sequence of instances for some $d \in \mathbb{N}$. We say that $S$ is \emph{$\operatorname{DS}_{\mathrm{L}}$-shattered} by $\mathcal{H}$, if there exists $F \subseteq \mathcal{H}, |F| < \infty$ such that for all $f \in F_{S}$ and for all $i \in [d]$, $f$ has at least $\mathrm{L}$ distinct $i$-neighbor in $F_{S}$.
\end{definition}

\begin{definition}[$\operatorname{DS}_{\mathrm{L}}$ dimension \cite{charikar2023characterization}] \label{def: l-ds-dim}
Let $\mathcal{H} \subseteq \mathcal{Y}^\mathcal{X}$ be a concept class, and $\mathrm{L} \in \mathbb{N}$. The \emph{$\operatorname{DS}_{\mathrm{L}}$ dimension} of $\mathcal{H}$, denoted by $\operatorname{DS}_{\mathrm{L}}(\mathcal{H}) \in \Bar{\mathbb{N}} \cup \{0\}$, is defined as the $\sup_{d \in \mathbb{N} \cup \{0\}}$ such that there exists a sequence of instances $S \in \mathcal{X}^{d}$ that is $\operatorname{DS}_{\mathrm{L}}$-shattered by $\mathcal{H}$.
\end{definition}

\noindent Next, we introduce the $\mathrm{L}$-Exponential dimension. This combinatorial complexity parameter is particularly useful in shifting arguments.

\begin{definition}[$\operatorname{E}_{\mathrm{L}}$-shattered set] \label{def: l-e-shattered}
Let $\mathcal{H} \subseteq \mathcal{Y}^\mathcal{X}$ be a concept class, and $\mathrm{L} \in \mathbb{N}$. Let $S \subseteq \mathcal{X}$ be a set of instances of size $d$ for some $d \in \mathbb{N}$. We say that $S$ is \emph{$\operatorname{E}_{\mathrm{L}}$-shattered} by $\mathcal{H}$, if $\big | \mathcal{H}|_{S} \big | \geq (\mathrm{L} + 1)^{d}$.
\end{definition}

\begin{definition}[$\mathrm{L}$-Exponential dimension \cite{charikar2023characterization}] \label{def: l-exponential-dim}
Let $\mathcal{H} \subseteq \mathcal{Y}^\mathcal{X}$ be a concept class, and $\mathrm{L} \in \mathbb{N}$. The \emph{$\mathrm{L}$-Exponential dimension} of $\mathcal{H}$, denoted by $\operatorname{E}_{\mathrm{L}}(\mathcal{H}) \in \Bar{\mathbb{N}} \cup \{0\}$, is defined as the $\sup_{d \in \mathbb{N} \cup \{0\}}$ such that there exists a set of instances $S \subseteq \mathcal{X}$ of size $d$ that is $\operatorname{E}_{\mathrm{L}}$-shattered by $\mathcal{H}$.
\end{definition}

\subsection{Multiclass Learning with Bandit Feedback Algorithms} \label{appendix: Multiclass Learning with Bandit Feedback Algorithms}

To improve readability, we use a comma to separate indices when multiple indices are involved. For example, let $v$ be a sequence of length $5$ whose elements are $2$-tuples. We denote by $v_{5, 1}$ the first component of the $5$-th element of $v$. In addition, letting $m, n \in \mathbb{N}$ be such that $m \leq n$, we write $\big [ (x_i, y_i, z_i) \big ]_{i= m}^{n}$ to denote $\big ( (x_m, y_m, z_m), (x_{m + 1}, y_{m + 1}, z_{m + 1}), \dots, (x_n, y_n, z_n) \big )$. This section is similar in spirit to \cite{hanneke2025universal}.

Let $\mathbf{A}$ be a multiclass PAC learning with bandit feedback algorithm. For any $\mathfrak{u} \in \big ( \mathcal{X} \times \Sigma \big )^{*}$ and any $x \in \mathcal{X}$, let us write $\mathbf{A}_1(x; \mathfrak{u})$ to denote $\mathbf{A}_1 \big ( (\mathfrak{u}, x) \big )$. Also, for every finite sequence of examples $\mathcal{S} \in \mathcal{Z}^{n} = (\mathcal{X} \times \mathcal{Y})^{n}$ of size $n$ for some $n \in \mathbb{N}$, we denote by $\mathbf{B}_{\mathbf{A}}(\mathcal{S})$ the random variable taking values in $(\mathcal{X} \times \Sigma)^{n}$ representing the first part of the input of $\mathbf{A}_1$ in round $n + 1$ when the choices of nature are obtained according to $\mathcal{S}$, which is recursively defined as follows:
\begin{equation*}
    \mathbf{B}_{\mathbf{A}}(\mathcal{S})_{i, 1} := \mathcal{S}_{i, 1}, \quad 1 \leq i \leq n
\end{equation*}
\begin{equation*}
    \mathbf{B}_{\mathbf{A}}(\mathcal{S})_{i, 2} := 
    \begin{cases}
        y \sim \mathbf{A}_1 \big ( \mathcal{S}_{1, 1}; \emptyset \big ),
        & i = 1\\
        y \sim \mathbf{A}_1 \big ( S_{i, 1}; \big [ \big ( \mathbf{B}_{\mathbf{A}}(\mathcal{S})_{j, 1}, \mathbf{B}_{\mathbf{A}}(\mathcal{S})_{j, 2}, \mathbf{B}_{\mathbf{A}}(\mathcal{S})_{j, 3} \big ) \big ]_{j=1}^{i - 1} \big ),
        & 1 < i \leq n
    \end{cases}
\end{equation*}
\begin{equation*}
    \mathbf{B}_{\mathbf{A}}(\mathcal{S})_{i, 3} := \mathds{1} \big \{ \mathbf{B}_{\mathbf{A}}(\mathcal{S})_{i, 2} \neq \mathcal{S}_{i, 2} \big \}, \quad 1 \leq i \leq n
\end{equation*}

Building upon that, for every finite sequence of examples $\mathcal{S} \in \mathcal{Z}^{n} = (\mathcal{X} \times \mathcal{Y})^{n}$ of size $n$ for some $n \in \mathbb{N}$, we denote by $\hat{h}_{\mathbf{A}}^{\mathcal{S}}$ the random function taking values in $\mathcal{Y}^{\mathcal{X}}$ such that for every $x \in \mathcal{X}$, we have: $\hat{h}_{\mathbf{A}}^{\mathcal{S}}(x) = \hat{y}$, where $\hat{y} \sim \mathbf{A}_2((\mathbf{B}_{\mathbf{A}}(\mathcal{S}), x))$.

\subsection{One-inclusion Graphs} \label{OIG}

We now introduce an important combinatorial object associated with a hypothesis class, namely, the one-inclusion hypergraph.

\begin{definition}[One-inclusion hypergraph \cite{haussler1994predicting, rubinstein2006shifting}] \label{One-inclusion hypergraph}
The one-inclusion hypergraph of $\mathcal{H} \subseteq \mathcal{Y}^m$ for some $m \in \mathbb{N}$ is a hypergraph $G(\mathcal{H}) = (V, E)$ that is defined as follows. The vertex set is $V = \mathcal{H}$. For each $i \in [m]$ and $f : [m] \setminus \{i\} \to \mathcal{Y}$, let $e_{i, f}$ be the set of all $h \in \mathcal{H}$ that agree with $f$ on $[m] \setminus \{i\}$. The edge set is defined as follows:
\begin{equation*}
    E = \{e_{i, f} \; | \; i \in [m], f:[m] \setminus \{i\} \to \mathcal{Y}, e_{i,f} \neq \emptyset \}.
\end{equation*}
Moreover, we say that the edge $e_{i,f} \in E$ is in the direction $i$, and is adjacent to the hypothesis/vertex $h$ if $h \in e_{i,f}$. In addition, every vertex $h \in V$ is adjacent to exactly $m$ edges. Further, the size of the edge $e_{i,f}$ is the size of the set $|e_{i,f}|$.
\end{definition}

For the one-inclusion hypergraph, we study the degrees of its vertices and its average degree.

\begin{definition}[$\mathrm{L}$-degree \cite{charikar2023characterization}] \label{def: L-degree}
Let $G(\mathcal{H}) = (V, E)$ be the one-inclusion hypergraph of $\mathcal{H} \subseteq \mathcal{Y}^m$ for some $m \in \mathbb{N}$. The $\mathrm{L}$-degree of a vertex $v \in V$ is defined as follows:
\begin{align*}
        \degree(L) &= |\{e \in E: v \in e, |e|>L\}|.
\end{align*}
\end{definition}

\begin{definition}[Average $L$-degree \cite{charikar2023characterization}] \label{def: avg-L-degree}
Let $G(\mathcal{H}) = (V, E)$ be the one-inclusion hypergraph of $\mathcal{H} \subseteq \mathcal{Y}^m$ for some $m \in \mathbb{N}$. The average $\mathrm{L}$-degree of $\mathcal{H}$ is defined as follows:
    \begin{align*}
        \avd(\mathcal{H}) := \frac{1}{|V|} \, \sum_{v \in V}\degree(v) = \frac{1}{|V|} \, \sum_{e \in E:|e|>L}|e|.
    \end{align*}
\end{definition}

Next, we restate the definition of a list orientation of the one-inclusion hypergraph from \cite{charikar2023characterization}. This notion captures the behavior of a deterministic learning algorithm that receives a labeled sample together with an unlabeled test point and outputs a prediction for that test point.

\begin{definition}[List Orientation \cite{charikar2023characterization}]
Let $G(\mathcal{H}) = (V, E)$ be the one-inclusion hypergraph of $\mathcal{H} \subseteq \mathcal{Y}^m$ for some $m \in \mathbb{N}$. A list orientation $\sigmal$ of $G(\mathcal{H}) = (V, E)$ having list size $\mathrm{L}$ is a mapping $\sigmal: E \to \{V'\subseteq V: |V'|\le L\}$ such that for each edge $e \in E$, $\sigmal(e) \subseteq e$.
\end{definition}

For a list orientation, we define its $\mathrm{L}$-outdegree.

\begin{definition}[$\mathrm{L}$-outdegree of a list orientation \cite{charikar2023characterization}]
Let $G(\mathcal{H}) = (V, E)$ be the one-inclusion hypergraph of $\mathcal{H} \subseteq \mathcal{Y}^m$ for some $m \in \mathbb{N}$, and let $\sigmal$ be a $\mathrm{L}$-list orientation of it. The $\mathrm{L}$-outdegree of $v \in V$ in $\sigmal$ is defined as follows:
\begin{equation*}
        \outdeg(v; \sigmal) := |\{e:v \in e, v \notin \sigmal(e)\}|.
\end{equation*}
The maximum $\mathrm{L}$-outdegree of $\sigmal$ is
\begin{equation*}
        \outdeg(\sigmal) := \sup_{v \in V} \outdeg(v;\sigmal).
\end{equation*}
\end{definition}

%% file: Appendix/Deferred.tex
\section{Deferred Proofs} \label{Deferred Proofs}

Next, we present \cref{algo: learn a L-list}, which is the subroutine of \cref{algo: Bandit PAC Multiclass Classification}, corresponding to the one-inclusion list leaners in \cite{charikar2023characterization,hanneke2026optimalsauerlemmakary}. For the sake of completeness, we restate the algorithm below. 
\begin{algorithm}[H]
\caption{The one-inclusion list algorithm $\cA_{\cH}^{L}$ for $\cH\subseteq\cY^{\cX}$ of list size $L$.} 
\label{algo: learn a L-list}
\begin{flushleft}
  {\bf Input:} A sample $S = \big((x_1, y_1),\cdots,(x_m, y_m)\big)$ realizable by $\cH$ and list size $L$.\\
{\bf Output:} A $L$-list hypothesis $\cA^L_{\cH}(S)=\mu^L_S:\cX \to \{Y \subseteq \cY: |Y|\le L\}$.\\
\ \\
For each $x \in \cX$, the $L$-list $\mu^{L}_S(x)$ is computed as follows.
\end{flushleft}
\begin{algorithmic}[1]
\State Consider the class $\cH' \subseteq \cY^{m+1}$ of all patterns over the \emph{unlabeled data} that are realizable by $\cH$.
\State Find a $L$-list orientation $\sigma^{L}$ of $G(\cH')$ that \textit{minimizes} the \textit{maximum} $L$-outdegree.
\State Consider the edge in direction $m+1$ defined by $S$:
\[e =\{ h \in \cH':  \forall i \in [m]  \ \ 
h(i) =y_i\}.\]
\State Set $\mu^{L}_S(x) = \{h(m+1):h \in \sigma^{L}(e)\}$.
\end{algorithmic}
\end{algorithm}
Next we show the complete calculation in the proof of \cref{lemma: bound exp by L/2 DS} and \cref{theorem: upper bound for bandit feedback}
\subsection{Complete calculation in the proof of \cref{lemma: bound exp by L/2 DS}}\label{complete proof of dexp}
\begin{proof}[Proof of \cref{lemma: bound exp by L/2 DS}]
    Consider the set $S$ which realizes $\dexp$, so that $|S| = \dexp$. Then we have that
    \[|\cH|_{S}|\ge(L+1)^{\dexp}.\]
    On the other hand, by Theorem 1 of \cite{hanneke2026optimalsauerlemmakary}, we know that
    \[|\cH|_{S}|\le \left\lceil\frac{L}{2}\right\rceil^{\dexp - \ddsh}\left(\frac{eK\dexp}{\ddsh}\right)^{\ddsh}.\]
    Combining the above two inequalities, we have 
    \[(L+1)^{\dexp}\le \left\lceil\frac{L}{2}\right\rceil^{\dexp - \ddsh}\left(\frac{eK\dexp}{\ddsh}\right)^{\ddsh}.\]
    Note that $\lceil L/2\rceil\le \frac{L+1}{2}$.
    \[(L+1)^{\dexp}\le \left(\frac{L+1}{2}\right)^{\dexp - \ddsh}\left(\frac{eK\dexp}{\ddsh}\right)^{\ddsh}.\]
    By rearranging the term, we have
    \[\left(\frac{2L+2}{L+1} \right)^{\dexp} \le \left(\frac{2eK\dexp}{(L+1)\ddsh}\right)^{\ddsh} .\]
    Taking the logarithm of both sides and rearranging the terms, we get
    \[\frac{\dexp}{\ddsh} \le \frac{\log\left(\frac{2eK\dexp}{(L+1)\ddsh}\right)}{\log\left(2\right)}.\]
    Thus, we have
    \[\frac{\dexp}{\ddsh} \le \frac{1}{\log(2)}\log\left(\frac{2eK}{L+1}\right) +  \frac{1}{\log(2)}\log\left(\frac{\dexp}{\ddsh}\right).\]
    Leveraging Lemma A.2. in \cite{shalev2014understanding}, we have
    \[\frac{\dexp}{\ddsh} \le \frac{1}{\log(2)}\log\left(\frac{2eK}{L+1}\right)+ 2\frac{1}{\log(2)}\log\left(\frac{1}{\log(2)}\right).\]
    Since $L \ge 1$,
     \[\frac{\dexp}{\ddsh} \le \frac{1+ \log K +2 \log(1/\log2)}{\log 2}.\]
    For $K \ge 2$, this implies 
    \[\dexp \le 5.05\ddsh \log K\le 6 \ddsh\log K.\]
\end{proof}

\subsection{Complete calculation in the proof of \cref{theorem: upper bound for bandit feedback}}
\label{complete proof of upper bound}
\begin{proof}[Proof of \cref{theorem: upper bound for bandit feedback}]
Based on \cref{claim: induc}, we know that with probability at least $1-\delta$ , the hypothesis $h_{S}$ returned by \cref{algo: Bandit PAC Multiclass Classification} has error rate at most $\epsilon$. Thus the sample complexity $\mathrm{\mathbf{m}}_{\cH}^{B}(\epsilon,\delta)$ can be upper bounded by the sample size of \cref{algo: Bandit PAC Multiclass Classification}. Formally, we have
\begin{align*}
    \mathrm{\mathbf{m}}_{\cH}^{B}(\epsilon,\delta) 
    &\le \sum_{t=1}^{\left\lfloor\log(K)\right\rfloor} \left(m_t - m_{t-1}\right)\\ 
    & = \sum_{t=1}^{\left\lfloor\log(K)\right\rfloor} \frac{4L_{t-1}}{1 - \frac{(t-1)\epsilon}{\log(K)}}\cdot \left(10\frac{\left(6 d_{DS}^{\lceil L_{t-1}/{2}\rceil} \log K+\log\left(\frac{2\log K}{\delta}\right)\right)\log K}{\epsilon}\right.\\
    &\qquad\left.+\log\left(\frac{2\log K}{\delta}\right)\right)\\
    & \le \frac{240\max_{1\le t\le\left\lfloor\log(K)\right\rfloor}\left(L_{t-1}d_{DS}^{\lceil L_{t-1}/{2}\rceil}\right)(\log K)^2}{\epsilon(1 - \epsilon)} 
    \\
    &\qquad+  \frac{40K\log\left(\frac{2\log K}{\delta}\right)(\log K)}{\epsilon(1 - \epsilon)}+ \frac{K\log\left(\frac{2\log K}{\delta}\right)}{1 - \epsilon}\\
\end{align*}
When $\epsilon \in (0,0.1)$, we have
\begin{align*}
\mathrm{\mathbf{m}}_{\cH}^{B}(\epsilon,\delta) &\le 276\frac{\max_{1\le t\le\left\lfloor\log(K)\right\rfloor}\left(L_{t-1}d_{DS}^{\lceil L_{t-1}/{2}\rceil}\right)(\log K)^3}{\epsilon} \\
&\qquad+ 46\frac{K\log K\log\left(\frac{2\log K}{\delta}\right)}{\epsilon} + 4K\log\left(\frac{2\log K}{\delta}\right).
\end{align*}

Replacing $L_t$ by its definition in the above inequality, we have:
\begin{align*}
\mathrm{\mathbf{m}}_{\cH}^{B}(\epsilon,\delta) &\le 552\frac{\max_{1\le L\le K-1}\left(Ld_{DS}^{L}\right)(\log K)^3}{\epsilon} \\
&\qquad+ 46\frac{K\log K\log\left(\frac{2\log K}{\delta}\right)}{\epsilon} + 4K\log\left(\frac{2\log K}{\delta}\right),
\end{align*}
which finishes the proof.
\end{proof}

%% file: Appendix/Lemmata.tex
\section{Concentration Inequalities} \label{Concentration Inequalities}

This section collects standard concentration inequalities for sums of independent random variables, which are used throughout our analysis for the reader's convenience.

\begin{lemma}[Chernoff Bound]
Let $(X_1, X_2, \ldots, X_n)$ be a sequence of $n$ independent Bernoulli random variables for some $n \in \mathbb{N}$, and define $X := \sum_{i=1}^{n} X_i$. Moreover, let $\mu = \mathbb{E}[X]$. Then, the following inequalities hold.
\begin{enumerate}
    \item \textbf{(Upper Tail)} For any $\delta > 0$, we have:
    $$
    \mathbb{P} \big[ X \geq (1 + \delta) \mu \big] \leq \exp \left( -\dfrac{\delta^2 \mu}{2 +\delta} \right).
    $$
    
    \item \textbf{(Lower Tail)} For any $0 < \delta < 1$, we have:
    $$
    \mathbb{P} \big[ X \leq (1 - \delta) \mu \big] \leq \exp \left( -\dfrac{\delta^2 \mu}{2} \right).
    $$
\end{enumerate}
\end{lemma}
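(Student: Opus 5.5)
The plan is the standard moment-generating-function (Cramér–Chernoff) argument. We apply Markov's inequality to $e^{\lambda X}$, use independence to factor the expectation, and optimize over $\lambda$. Write $p_i = \Pr[X_i = 1]$, so that $\mu = \sum_i p_i$.

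\textbf{MGF bound.} The first step is a uniform bound on the moment generating function. For any real $\lambda$, independence gives
\[
\mathbb{E}\big[e^{\lambda X}\big] = \prod_{i=1}^n \big(1 + p_i(e^{\lambda}-1)\big).
\]
Applying $1+u \le e^{u}$ to each factor yields $\mathbb{E}[e^{\lambda X}] \le \exp\big(\mu(e^{\lambda}-1)\big)$.

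\textbf{Upper tail.} For $\lambda>0$, Markov's inequality gives
\[
\Pr[X \ge (1+\delta)\mu] \le \exp\big(\mu(e^{\lambda}-1) - \lambda(1+\delta)\mu\big).
\]
We choose $\lambda = \ln(1+\delta)$, which gives the bound $\exp\big(-\mu[(1+\delta)\ln(1+\delta) - \delta]\big)$.

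\textbf{Lower tail.} For $\lambda>0$, we apply Markov's inequality to $e^{-\lambda X}$:
\[
\Pr[X \le (1-\delta)\mu] \le \exp\big(\mu(e^{-\lambda}-1) + \lambda(1-\delta)\mu\big).
\]
We choose $\lambda = -\ln(1-\delta)$, which is valid since $0<\delta<1$. This gives $\exp\big(-\mu[(1-\delta)\ln(1-\delta) + \delta]\big)$.

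\textbf{Elementary inequalities.} The remaining work is two calculus inequalities, and this is the only real obstacle. For the upper tail we need
\[
g(\delta) := (1+\delta)\ln(1+\delta) - \delta - \frac{\delta^2}{2+\delta} \ge 0 \quad \text{for } \delta > 0.
\]
We have $g(0)=0$, so it suffices that $g'(\delta) = \ln(1+\delta) - \frac{\delta^2+4\delta}{(2+\delta)^2} \ge 0$. I would use the known bound $\ln(1+\delta) \ge \frac{2\delta}{2+\delta}$, which itself follows by differentiating: $h(\delta)=\ln(1+\delta)-\frac{2\delta}{2+\delta}$ has $h(0)=0$ and $h'(\delta) = \frac{\delta^2}{(1+\delta)(2+\delta)^2} \ge 0$. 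It then remains to check
\[
\frac{2\delta}{2+\delta} \ge \frac{\delta^2+4\delta}{(2+\delta)^2},
\]
which reduces to $2\delta^2+4\delta \ge \delta^2 + 4\delta$ and is clearly true.

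For the lower tail we need $(1-\delta)\ln(1-\delta) + \delta \ge \delta^2/2$ on $(0,1)$. I would expand $\ln(1-\delta) = -\sum_{k\ge1}\delta^k/k$, which gives
\[
(1-\delta)\ln(1-\delta) = -\delta + \sum_{k\ge 2}\frac{\delta^k}{k(k-1)}.
\]
All terms in the sum are nonnegative, and the $k=2$ term alone contributes $\delta^2/2$, which yields the claim.

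Substituting these two inequalities into the exponents from the upper-tail and lower-tail steps gives the stated bounds $\exp\big(-\delta^2\mu/(2+\delta)\big)$ and $\exp\big(-\delta^2\mu/2\big)$.
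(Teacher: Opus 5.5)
Your proof is correct and complete. The MGF bound, the choices $\lambda=\ln(1+\delta)$ and $\lambda=-\ln(1-\delta)$, and both calculus inequalities check out, including $h'(\delta)=\delta^2/\bigl((1+\delta)(2+\delta)^2\bigr)$ and the series identity for $(1-\delta)\ln(1-\delta)$. The paper does not prove this lemma; it lists it in the appendix as a standard concentration inequality for the reader's convenience, and your Cram\'er--Chernoff argument is the textbook proof behind that citation.
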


\begin{lemma}[Bernstein's Inequality]
Let $(X_1, X_2, \ldots, X_n)$ be a sequence of $n$ independent real-valued random variables with zero mean (i.e., for every $i \in [n]$, we have: $\mathbb{E}[X_i] = 0$). Suppose that there exists a constant $M \in  \mathbb{R}^{+}$ such that for every $i \in [n]$, we have: $|X_i| \le M$ almost surely. Moreover, define $S_n := \sum_{i=1}^n X_i$, and $\sigma^2 := \sum_{i=1}^n \mathbb{E}[X_i^2]$. Then for any $t > 0$, the following holds.
$$
\mathbb{P} \big[ S_n \geq t \big] \leq \exp \left( -\dfrac{t^2}{2\sigma^2 + \frac{2}{3} M t} \right).
$$
Furthermore, applying the same bound to both $S_n$ and $-S_n$ yields the two-sided inequality below.
$$
\mathbb{P} \big[ |S_n| \geq t \big] \leq 2 \exp \left( -\dfrac{t^2}{2\sigma^2 + \frac{2}{3}Mt} \right).
$$
\end{lemma}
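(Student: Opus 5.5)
We prove the one-sided bound by the Cramér--Chernoff exponential moment method, then get the two-sided version by symmetry. Fix $\lambda \in (0, 3/M)$. By Markov's inequality applied to $e^{\lambda S_n}$ and independence of the $X_i$,
\[
\mathbb{P}\big[S_n \ge t\big] \le e^{-\lambda t}\,\mathbb{E}\big[e^{\lambda S_n}\big] = e^{-\lambda t}\prod_{i=1}^n \mathbb{E}\big[e^{\lambda X_i}\big].
\]
So the work reduces to bounding each factor $\mathbb{E}[e^{\lambda X_i}]$ by a quantity involving only $\mathbb{E}[X_i^2]$ and $M$.

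\textbf{Moment generating function bound.} Expand $e^{\lambda X_i} = 1 + \lambda X_i + \sum_{k\ge 2} \lambda^k X_i^k / k!$ and take expectations; the linear term vanishes because $\mathbb{E}[X_i]=0$. Since $|X_i|\le M$ almost surely, we have $|X_i|^k \le X_i^2 M^{k-2}$ for $k \ge 2$. We also use the elementary inequality $k! \ge 2\cdot 3^{k-2}$ for $k\ge 2$, which follows by induction since each factor beyond $2$ is at least $3$. Together these give
\[
\mathbb{E}\big[e^{\lambda X_i}\big] \le 1 + \frac{\lambda^2 \mathbb{E}[X_i^2]}{2}\sum_{k\ge 2}\Big(\frac{\lambda M}{3}\Big)^{k-2} = 1 + \frac{\lambda^2 \mathbb{E}[X_i^2]}{2(1-\lambda M/3)} \le \exp\!\Big(\frac{\lambda^2 \mathbb{E}[X_i^2]}{2(1-\lambda M/3)}\Big),
\]
where the geometric series converges because $\lambda M/3<1$, and the last step uses $1+u\le e^u$. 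Multiplying over $i$ yields
\[
\mathbb{P}\big[S_n\ge t\big]\le \exp\!\Big(-\lambda t + \frac{\lambda^2\sigma^2}{2(1-\lambda M/3)}\Big).
\]

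\textbf{Choice of $\lambda$.} Take $\lambda = t/(\sigma^2 + Mt/3)$. This satisfies $\lambda M/3 = (Mt/3)/(\sigma^2+Mt/3) < 1$ whenever $\sigma^2>0$. (If $\sigma^2=0$, then $S_n=0$ almost surely and the claim is trivial.) With this choice, $1-\lambda M/3 = \sigma^2/(\sigma^2+Mt/3)$, so the quadratic term equals $t^2/\big(2(\sigma^2+Mt/3)\big)$. The linear term equals $-t^2/(\sigma^2+Mt/3)$. Adding the two gives the exponent $-t^2/(2\sigma^2 + \tfrac{2}{3}Mt)$, which is the stated one-sided bound.

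\textbf{Two-sided bound and main obstacle.} The variables $-X_i$ satisfy the same hypotheses: zero mean, $|-X_i|\le M$, and the same second moments. Applying the one-sided bound to $-S_n$ and taking a union bound over the events $\{S_n\ge t\}$ and $\{-S_n\ge t\}$ gives the factor $2$. The only delicate step is the moment comparison in the MGF bound, namely combining $|X_i|^k\le X_i^2M^{k-2}$ with $k!\ge 2\cdot 3^{k-2}$. This is exactly what produces the constant $\tfrac{2}{3}$ in front of $Mt$, as opposed to a weaker Bennett-type constant.
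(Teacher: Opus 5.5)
Your argument is correct. It is the standard Cramér--Chernoff proof, and every step checks out: the moment comparison via $k!\ge 2\cdot 3^{k-2}$, the choice $\lambda=t/(\sigma^2+Mt/3)$ (which lies in $(0,3/M)$ when $\sigma^2>0$), the degenerate case $\sigma^2=0$, and the union bound for the two-sided version. The paper gives no proof of its own; it lists the inequality among standard concentration tools, so there is no alternative route to compare against.

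One correction to your closing remark. Bennett's inequality is actually sharper than Bernstein's, since Bernstein follows from it via $(1+u)\ln(1+u)-u\ge u^2/(2(1+u/3))$. What your factorial bound improves on is a cruder estimate such as $k!\ge 2^{k-1}$, which would give $Mt$ in place of $\frac{2}{3}Mt$.
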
